\newcommand{\cconv}{{\,\tilde \star\,}}
\newcommand{\oconv}{{\,\bar \star\,}}
\newcommand{\tconv}{{\,\tilde \star\,}}
\newcommand{\SE}{{SE(2)}}
\newcommand{\SO}{{SO(2)}}
\newcommand{\R}{\mathbb{R}}
\newcommand{\ox}{\bar x} 
\newcommand{\V}{{\bf V}} 
\newcommand{\oy}{\bar y} 
\newcommand{\oh}{\bar h} 
\newcommand{\og}{\bar g} 
\newcommand{\ophi}{\overline \phi} 
\newcommand{\opsi}{\overline \psi} 
\newcommand{\tx}{\tilde x} 
\newcommand{\tiy }{\tilde y} 
\newcommand{\tphi}{\widetilde \phi} 
\newcommand{\tpsi}{\widetilde \psi} 
\newcommand{\tW}{{\widetilde W}}
\newcommand{\tU}{{\widetilde U}}
\newcommand{\tS}{{\widetilde S}}
\newcommand{\oW}{{\overline W}}
\newcommand{\oDownarrow}{{\overline \downarrow}}
\newcommand{\Z}{\mathbb{Z}}
\newcommand{\C}{\mathbb{C}}
\newcommand{\N}{\mathbb{N}}
\newcommand{\E}{\mathbb{E}}
\newcommand{\cPP}{\Lambda}
\newcommand{\la}{{\lambda}}
\newcommand{\tLa}{{\tilde \Lambda}}
\newcommand{\lau}{{\lambda_1}}
\newcommand{\mm}{{M}}
\newcommand{\om}{{\omega}}
\newcommand{\equaldef}{\triangleq}
\newcommand{\LD}{{{\bf L^2}(\mathbb R^2)}}
\newcommand{\Ld}{{{\bf L^2}}}
\DeclareMathOperator{\spn}{span}
\journalname{Submitted to International Journal of Computer Vision}
\begin{document}\sloppy

\title{Rigid-Motion Scattering for Texture Classification%
\thanks{Work supported by ANR 10-BLAN-0126 and Advanced ERC InvariantClass 320959}
}


\author{Laurent Sifre         \and
        Stéphane Mallat
}


\institute{L. Sifre \at
              CMAP Ecole Polytechnique \\
              Route de Saclay, 91128 Palaiseau France \\
              \email{laurent.sifre@gmail.com}           
           \and
           S. Mallat \at
           Département d'informatique\\ 
           École normale supérieure \\
			45 rue d'Ulm F-75230 Paris Cedex 05 France
}

\date{Received: date / Accepted: date}

\maketitle

\begin{abstract}
A rigid-motion scattering computes adaptive invariants along
translations and rotations, with a deep convolutional network.
Convolutions are calculated
on the rigid-motion group, with wavelets defined on the
translation and rotation variables. It preserves joint rotation and
translation information,
while providing global invariants at any desired scale.
Texture classification is studied, through the characterization of 
stationary processes from a single realization.  
State-of-the-art results are obtained on multiple texture
data bases, with important rotation and scaling variabilities.
\keywords{Deep network \and scattering \and wavelet \and rigid-motion \and texture \and classification}
\end{abstract}

\section{Introduction}

Image classification requires to find representations which reduce
non-informative intra-class variability, and hence which are partly 
invariant, while preserving 
discriminative information across classes. Deep neural networks build
hierarchical invariant representations by applying
a succession of linear and non-linear operators which are learned from
training data. They provide state of the art results
for complex image classifications tasks
\citep{hinton, lecun, sermanet, alex, google}. A major issue is to 
understand the properties of these networks, what 
needs to be learned and what is generic and
common to most image classification problems. 
Translations, rotations and scaling are common
sources of variability for most images, because of changes of view points and
perspective projections of three dimensional surfaces. Building 
adaptive invariants
to such transformation is usually considered 
as a first necessary steps for classification \citep{poggio}. 
We concentrate on this generic part, which is
adapted to the physical properties of the imaging environment, as opposed to
the specific content of images which needs to be learned.

This paper defines deep convolution scattering networks which can provide
invariant to translations and rotations, and hence
to rigid motions in $\R^2$. The level of invariance is adapted to the
classification task. Scattering transforms have been introduced to
build translation invariant representations, which are stable to deformations
\citep{mallat}, with applications to image classification \citep{joan}. 
They are implemented as a convolutional network, with successive
spatial wavelet convolutions at each layer.
Translations is a simple commutative
group, parameterized by the location of the input pixels. Rigid-motions 
is a non-commutative group whose parameters are not explicitly given by the
input image, which raises new issues. The first one is to understand
how to represent the joint information between translations and rotations.
We shall explain why separating both 
variables leads to important loss of information and yields representations
which are not sufficiently discriminative. This leads to the
construction of a scattering transform on the full rigid-motion group,
with rigid-motion convolutions on the joint rotation and translation
variables. 
Rotations variables 
are explicitly introduced in the second network layer,
where convolutions are performed on the rigid-motion group along the
joint translation and rotation variables. As opposed to translation
scattering where linear transforms are performed along spatial variables
only,
rigid-motion scattering recombines the new variables
created at the second network layer, which is usually done in deep
neural networks. However, a rigid-motion scattering involves
no learning since 
convolutions are computed with predefined wavelets 
along spatial and rotation variables. The stability is guaranteed by 
its contraction properties, which are explained.

We study applications of rigid-motion scattering
to texture classification, where translations, rotations and scaling 
are major sources of variability.
Image textures can be modeled
as stationary processes, which are typically
non Gaussian and non Markovian, with long range dependencies.
Texture recognition is a fundamental problem of 
visual perception, 
with applications to medical, satellite imaging, material recognition \citep{uiuc,wmfs,srp}, object or scene recognition \citep{malikscene}. 
Recognition is performed from a single image, and hence can not involve
high order moments, because their 
estimators have a variance which is too large. 
Finding a low-variance ergodic representation,
which can discriminate
these non-Gaussian stationary processes, is a fundamental probability
and statistical issue.

Translation invariant
scattering representation of stationary processes have been studied 
to discriminate texture which do not involve
important rotation or scaling variability \citep{joan,joan2}. 
These results are extended to joint translation and rotation invariance.
Invariance to scaling variability is incorporated through linear
projectors. It provides effective invariants, which yield
state of the art classification
results on a large range of texture data bases.

Section \ref{Transsec} reviews the construction of translation invariant
scattering transforms. Section \ref{subsec:separable_vs_joint} explains 
why invariants to rigid motion can not be computed by separating the
translation and rotation variables, without loosing important information.
Joint translation and rotation operators defines
a rigid motion group, also called
special Euclidean group.
Rigid-motion scattering transforms are studied in Section 
\ref{sec:roto-trans-scat}. Convolutions on the rigid-motion
group are introduced in Section \ref{subsec:affg} in order to
define wavelet tranforms over this group. Their properties are
described in Section \ref{subsec:wavelet_transform_roto_trans}.
A rigid-motion scattering 
iteratively computes the modulus of such wavelet transforms.
The wavelet transforms jointly process translations and
rotations, but can be computed with separable convolutions
along spatial and rotation variables. 
A fast filter bank implementation is described in Section \ref{sec:fawt},
with a cascade of spatial convolutions and downsampling.
Invariant scattering representations are applied to image
texture classification in Section \ref{sec:classification}. 
State of the art results 
on four texture datasets containing different types and ranges of variability \citep{KTH_TIPS_DL, UIUC_DL, UMD_DL, FMD_DL}. All numerical experiments are reproducible with the ScatNet \citep{scatnet} MATLAB toolbox.

\section{Invariance to Translations, Rotations and Deformations}
\label{Transsec}

Section \ref{subsec:translat} 
reviews the property of translation invariant representations and
their stability relatively to deformations. The use of wavelet
transform is justified because of their stability to deformations.
Their properties are summarized in Section \ref{subsec:wavelet2d}.
Section \ref{subsec:scat2d} describes translation scattering transforms,
implemented with a deep convolutional network. Separable extensions
to translation and rotation invariance is discussed in 
Section \ref{subsec:separable_vs_joint}. It is shown that this
simple strategy leads to an important loss of information.

\subsection{Translation Invariance and Deformation Stability}
\label{subsec:translat}

Building invariants to translations and small deformations is a prototypical
representation issue for classification, which carries major ingredients
that makes this problem difficult.
Translation invariance is simple to compute. There are many possible strategies
that we briefly review. The main difficulty is to build a representation
$\Phi(x)$ which is also stable to deformations. 

A representation $\Phi (x)$ is said to be translation invariant
if $x_v (u) = x(u-v)$ has the same representation
\[
\forall v \in \R^2~~,~~\Phi(x) = \Phi(x_v)~.
\]
Besides translation invariance, it is often necessary to build
invariants to any specific class of deformations through linear
projectors. Invariant to translation can be computed with a registration
$\Phi x(u) = x(u - a(x))$ where $a(x)$ is
an anchor point which is translated when $x$ is translated.
It means that if $x_v (u) = x(u-v)$ then
$a(x_v) = a(x) + v$. For example,   
$a(x) = \arg \max_{u} |x \star h(u)|$,
for some filter $h(u)$. These invariants are simple and preserve
as much information as possible. The Fourier transform
modulus $|\hat x(\omega)|$ is also invariant to translation.

Invariance to translations is often not enough. 
Suppose that $x$ is not just translated but also 
deformed to give $x_\tau(u) =
x(u-\tau(u))$ with $|\nabla \tau(u)| < 1$. 
Deformations belong to the infinite dimensional group of diffeomorphisms. 
Computing invariants to deformations would mean losing too much information. In a digit classification problem, a deformation invariant representation would
confuse a 1 with a 7. We then do not want to be invariant to any deformations,
but only to the specific deformations within the digit class, while preserving information to discriminate different classes. Such deformation invariants need to be learned as
an optimized linear combinations.

Constructing such linear invariants requires the
representation to be stable to deformations. 
A representation $\Phi(x)$ is stable to deformations 
if $\|\Phi(x) - \Phi(x_\tau)\|$ is
small when the deformation is small. The deformation size is measured by
$\|\nabla \tau \|_\infty =  \sup_u |\nabla \tau(u)|$. 
If this quantity vanishes then $\tau$ is a ``pure'' translation without
deformation. Stability is formally defined as Lipschitz continuity
relatively to this metric. It means that there exists $C > 0$ such that for all
$x(u)$ and $\tau (u)$ with $\|\nabla \tau \|_\infty < 1$
\begin{equation}
\label{eq:stability} 
\|\Phi(x) - \Phi(x_\tau)\| \leq C\,  \|\nabla \tau \|_\infty\, \|x\|~.
\end{equation}

\begin{figure*}\sidecaption
\resizebox{0.7\hsize}{!}{\includegraphics[width=1\linewidth]{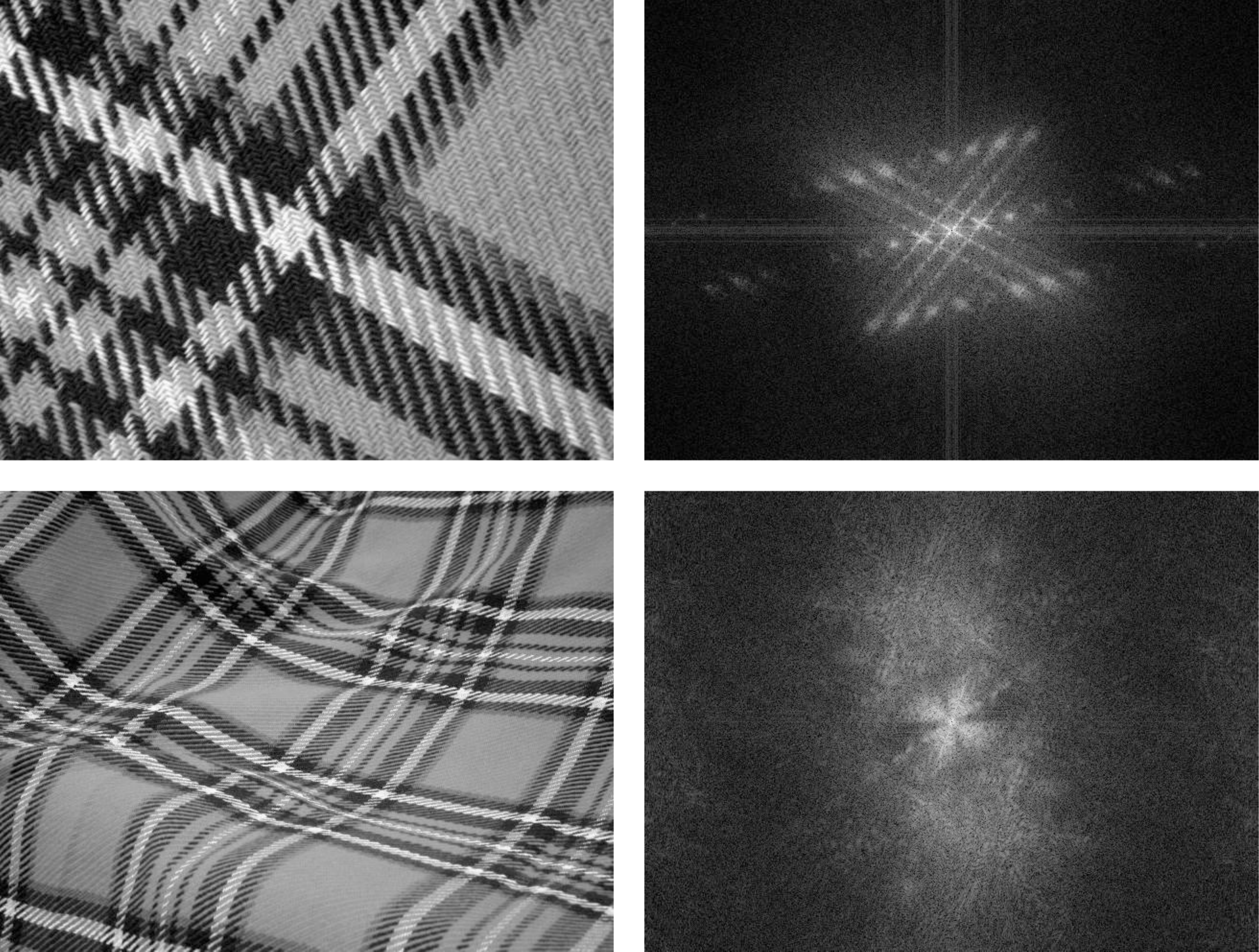}}
\caption{Two images of the same texture (left) from the UIUCTex dataset \citep{UIUC_DL} and the $\log$ of their modulus of Fourier transform (right). The periodic patterns of the texture corresponds to fine grained dots on the Fourier plane. When the texture is deformed, the dots spread on the Fourier plane, which illustrates the fact that modulus of Fourier transform is unstable to elastic deformation.}
\label{fig:fourierunstable}
\end{figure*}

This Lipschitz continuity property implies that
deformations are locally linearized by the representation $\Phi$.
Indeed, Lipschitz continuous operators are almost everywhere differentiable
in the sense of Gateau. 
It results that $\Phi(x) - \Phi(x_\tau)$ can be approximated by a linear
operator of $\nabla \tau$ if $\|\nabla \tau\|_\infty$ is small. A family of small deformations 
thus generates a linear space $\spn_\tau (\Phi (x_\tau))$. In the transformed space, an invariant
to these deformations can then be computed 
with a linear projector on the orthogonal complement $\spn_\tau (\Phi (x_\tau)) ^\perp$.

Registration invariants are not stable to deformations. 
If $x(u) = 1_{[0,1]^2}(u) + 1_{[\alpha,\alpha+1]^2}(u)$ then for $\tau(u) = \epsilon u$
one can verify that $\|x - x_\tau \| \geq 1$ if 
$|\alpha| > \epsilon^{-1}$.  It results that (\ref{eq:stability}) is
not valid. One can similarly prove that the Fourier transform modulus
$\Phi(x) = |\hat x|$ is not stable to deformations because high frequencies
move too much with deformations as can be seen on figure \ref{fig:fourierunstable}.

Translation invariance often needs to be computed locally.
Translation invariant descriptors which are stable to deformations can be
obtained by averaging. If translation invariant is only needed within
a limited range smaller than $2^J$ then it is sufficient to average
$x$ with a smooth window $\phi_J(u) = 2^{-2J} \phi (2^{-J} u)$ of width $2^J$:
\begin{equation}
\label{eq:localavg}
x \star \phi_J (u) = \int x(v) \phi_J (u-v) \,dv .
\end{equation}
It is proved in \citep{mallat} that
if $\|\nabla \phi \|_1 < +\infty$ and  
$\||u|\,\nabla \phi(u) \|_1 < +\infty$ and
$\|\nabla \tau\|_\infty\leq 1-\epsilon$ with $\epsilon > 0$
then there exists $C$ such that
\begin{equation}
\label{Wave1}
\|x_\tau \star \phi_J - x \star \phi_J \| \leq C\,\|x\| \Bigl(
{2^{-J} \|\tau \|_\infty +  \|\nabla \tau \|_\infty } \Bigr)~.
\end{equation}

Averaging operators lose all high frequencies, and hence eliminate
most signal information. These high frequencies can be recovered with
a wavelet transform. 

\subsection{Wavelet Transform Invariants}
\label{subsec:wavelet2d}
Contrarily to sinusoidal waves, wavelets are localized
functions which are stable to deformations. They are thus well adapted
to construct translation invariants which are stable to deformations.
We briefly review wavelet transforms and their applications in computer vision.
Wavelet transform has been used to analyze stationary processes
and image textures. They provide a set of coefficients closely related
to the power spectrum.

\begin{figure*}\sidecaption
\includegraphics[width=0.7\linewidth]{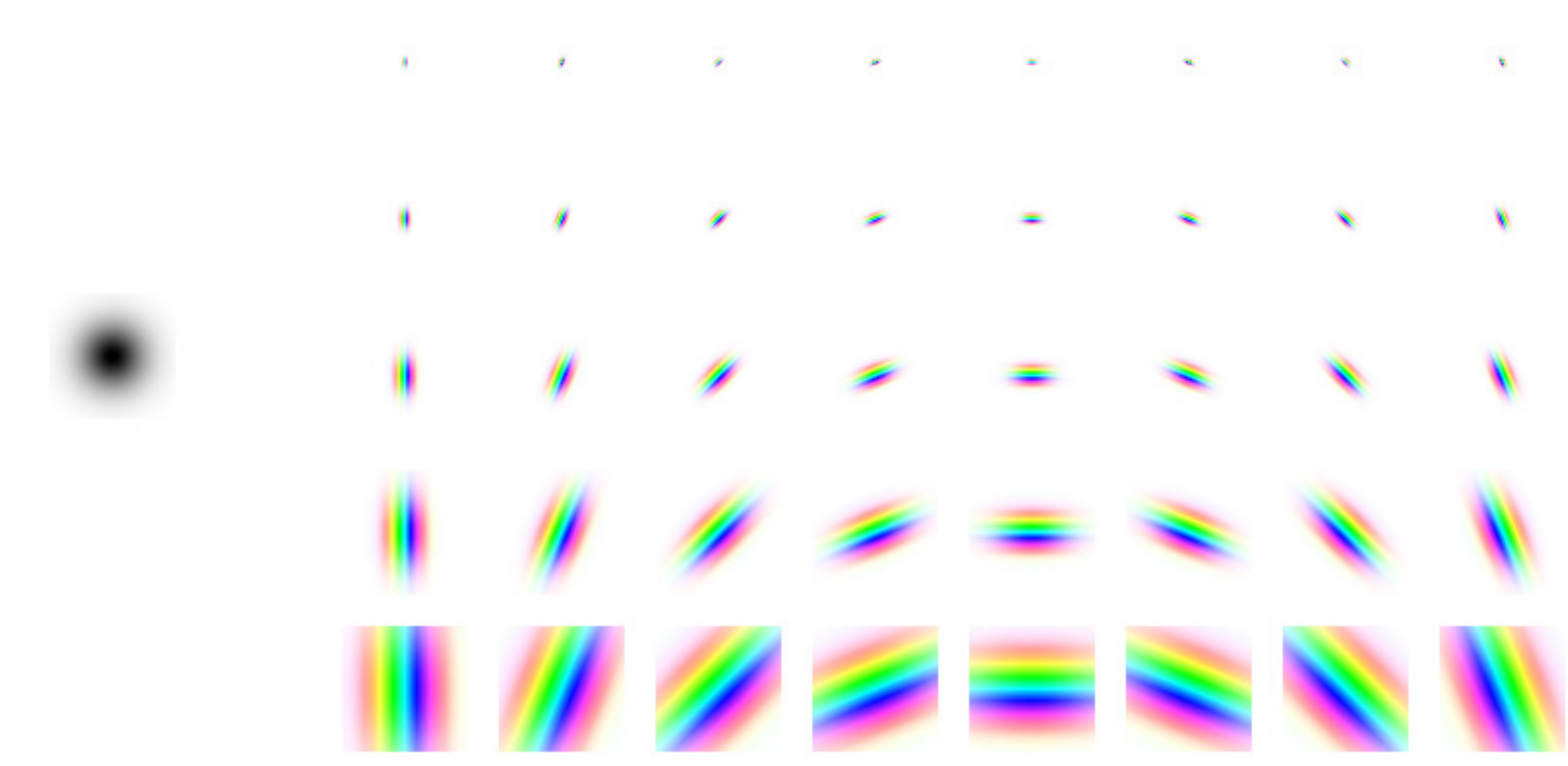}
\caption{The gaussian window $\phi_J$ (left) and oriented and dilated Morlet wavelets $\psi_{\theta,j}$ (right). Saturation corresponds to amplitude while color corresponds to complex phase.}
\label{fig:allwavelet}
\end{figure*}

A directional wavelet transform extracts the signal high-frequencies
within different frequency bands and orientations.
Two-dimensional directional wavelets are obtained by scaling and rotating
a single band-pass filter $\psi$.
Multiscale directional wavelet filters are defined for any $j \in \Z$
and rotation $r_\theta$ of angle $\theta \in [0,2 \pi]$ by
\begin{equation}
\label{rotdilwave}
\psi_{\theta,j} (u) =  2^{-2j} \psi (2^{-j} r_{-\theta} u)~.
\end{equation}
If the Fourier transform $\hat \psi(\omega)$ is centered at a frequency
$\eta$ then $\hat \psi_{\theta,j} (\omega) = \hat \psi(2^{j} r_{-\theta} \omega)$
has a support centered at
$2^{-j} r_{\theta} \eta$, with a bandwidth proportional to $2^{-j}$. 
We consider a group $G$ of rotations $r_\theta$ which is either a finite
subgroup of $SO(2)$ or which is equal to $SO(2)$. A finite rotation
group is indexed by $\Theta = \{2 k \pi/K~:~0 \leq k < K \}$
and if $G = SO(2)$ then $\Theta = [0,2\pi)$. 
The wavelet transform at a scale $2^J$ is defined by
\begin{equation}
\label{wavedfn}
W x = \Big\{ x \star \phi_J (u)\,,\,x \star \psi_{\theta,j} (u) \Big\}_{u \in \R^2 , \theta \in \Theta , j < J}~.
\end{equation}
It decomposes $x$ along different orientations $\theta$
and scales $2^j$ in the neighborhood of each location $u$.

The choice of wavelet $\psi$ depends upon the desired angular resolution.
In the following we shall concentrate on Morlet wavelets. 
A Morlet wavelet is defined by
\begin{equation}
\psi(u_1,u_2) = \exp \left( - \frac{u_1^2 + u_2^2 / \zeta^2}{2} \right) (\exp(i \xi u_1) - K)
\end{equation}
The slant $\zeta$ of the envelope control the angular sensitivity of $\psi$. The second factor is an horizontal sine wave of frequency $\xi$. 
The constant $K>0$ is adjusted so that $\int \psi = 0$.
Morlet wavelets for $\pi \leq \theta < 2\pi$ are not computed since they verified $\psi_{\theta+\pi,j} = \psi_{\theta,j}^*$, where $z^*$ denotes the complex conjugate of $z$. 
The averaging function is chosen to be a Gaussian window
\begin{equation}
\phi(u) = (2\pi \sigma^2)^{-1} \exp(-u^2/ (2\pi \sigma^2))
\end{equation}
Figure \ref{fig:allwavelet} shows such window and Morlet wavelets.

To simplify notations, we shall write 
$\sum_{\theta \in \Theta} h(\theta)$ a summation over $\Theta$ even
when $\Theta = [0,2\pi)$ in which case this discrete sum represents the
integral $\int_0^{2\pi} h(\theta)\,d\theta$.
We consider wavelets which satisfy the following
Littlewood-Paley condition, for $\epsilon > 0$ and almost all $\om \in \R^2$
\begin{equation}
\label{eq:pars}
1 - \epsilon \leq |\hat \phi(\om)|^2 + 
\sum_{j<0} \sum_{\theta \in \Theta}
|\hat \psi (2^{j} r_\theta \omega)|^2\leq 1~.
\end{equation}
Applying the Plancherel formula proves that
if $f$ is real then 
$W x = \{x \star \phi_{2^J}\,,\,x \star \psi_{\theta,j} \}_{\theta, j}$
satisfies
\begin{equation}
\label{wavecont}
(1 - \epsilon)\, \|x\|^2 \leq \|W x \|^2 \leq \|x\|^2 ~,
\end{equation}
with 
\[
\|W x \|^2 = \|x \star \phi_{2^J} \|^2 + 
\sum_{j<J} \sum_{\theta \in \Theta} \|x \star \psi_{\theta,j} \|^2\,.
\]
In the following we suppose that 
$\epsilon < 1$ and hence that the wavelet transform is a nonexpansive
and invertible operator, with a stable inverse. 
If $\epsilon = 0$ then $W$ is unitary. 

The Morlet wavelet $\psi$ shown in Figure \ref{fig:allwavelet}
together with $\phi(u) = \exp(-|u|^2/(2 \sigma^2))/(2 \pi \sigma^2)$
for $\sigma = 0.7$ satisfy 
(\ref{eq:pars}) with $\epsilon=0.25$. 
These functions are used
in all classification applications.

\begin{figure*}\sidecaption
\includegraphics[width=0.7\linewidth]{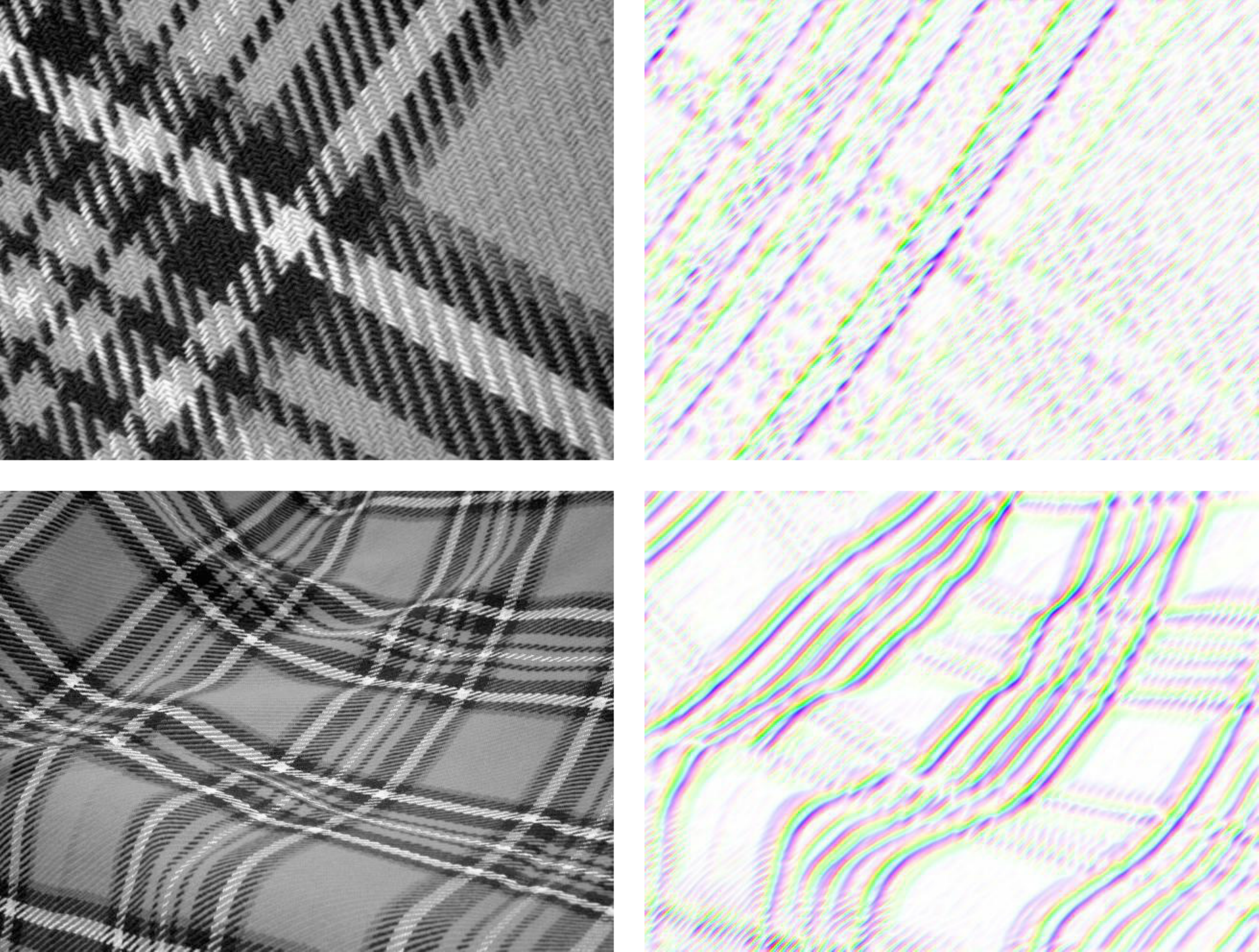}
\caption{Two images of the same texture (left) and their convolution with the same Morlet wavelet (right). Even though the texture is highly deformed, the wavelet responds to roughly the same oriented pattern in both images, which illustrates its stability to deformation.}
\label{fig:waveletstable}
\end{figure*}

Unlike Fourier waveforms, Morlet wavelets $\psi_{\theta,j}$ are smooth and localized 
which makes them stable to deformation. Figure \ref{fig:waveletstable} shows that 
the responses of the same wavelet for two highly deformed images are comparable
 but displaced. Indeed, Wavelet coefficients $x \star \psi_{\theta,j} (u)$ are 
 computed with 
convolutions.  They are therefore translation covariant, which means that 
if $x$ is translated then $x \star \psi_{\theta,j} (u)$ is translated.
Removing the complex phase like in a Fourier transform defines a positive
envelope $|x \star \psi_{\theta,j} (u)|$ which is still covariant to translation,
not invariant.   Averaging this positive envelope defines locally
translation invariant coefficients which depends upon $(u,\theta,j)$:
\[
S_1 x(u,\theta,j) = |x \star \psi_{\theta,j} | \star \phi_J (u)\,.
\]

Such averaged wavelet coefficients are used under various forms in computer vision. 
Global histograms of quantized filter responses have been used for texture recognition in \citet{maliktex}.
SIFT\citep{sift} and DAISY\citep{daisy} descriptors computes local histogram of orientation. This is similar to $S_1 x$ definition, but with different wavelets and non-linearity. Due to their stability properties, SIFT-like descriptors have been used extensively for a wide range of applications where stability to deformation is important, such as key point matching in pair of images from different view points, and generic object recognition.

\subsection{Transation Invariant Scattering}
\label{subsec:scat2d}

The convolution by $\phi_J$ provides a local translation invariance but also
loses spatial variability of the wavelet transform.
A scattering successively recovers the information lost by the 
averaging which computes the invariants. Scattering consists in a cascade
of wavelet modulus transforms, which can be interpreted as a deep neural network.

A scattering transform is computed by iterating on wavelet transforms
and modulus operators. To simplify notations, we shall write
$\lambda = (\theta,j)$ and
$\Lambda  = \{(\theta,j)~:~\theta \in [0,2\pi]\}$. The wavelet
transform and modulus operations are combined in a single 
wavelet modulus operator defined by:
\begin{equation}
\label{dsfnhdsfs}
|W| x = \Big\{x \star \phi_{J}\,,\,|x \star \psi_{\la}| \Big\}_{\la \in \cPP}~.
\end{equation}
This operator averages coefficients with $\phi_J$ to produce 
invariants to translations and computes higher frequency wavelet transform
envelopes which carry the lost information.
A scattering transform can be interpreted as a neural network illustrated in Figure \ref{fig:translation_scat} which propagates a signal $x$ across multiple layers of the network
and which outputs at each layer $m$ scattering invariant coefficients $S_m x$.

\begin{figure*}\sidecaption
\psfrag{lab0}{$x$}
\psfrag{labw0}{$| W|$}
\psfrag{labw1}{$| W|$}
\psfrag{labw2}{$| W|$}
\psfrag{labwm}{$| W|$}
\psfrag{lab2}{$U_1 x$}
\psfrag{lab3}{$U_2 x$}
\psfrag{lab4}{$S_0 x$}
\psfrag{lab5}{$S_1 x$}
\psfrag{lab6}{$S_2 x$}
\psfrag{lab7}{$\dotsc$}
\psfrag{lab8}{$U_m x$}
\psfrag{lab9}{$U_{m+1} x$}
\psfrag{lab10}{$S_m x$}
\includegraphics[width=0.65\linewidth]{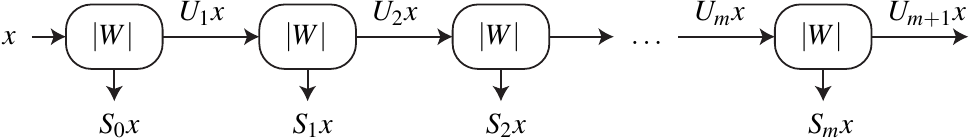}
\caption{Translation scattering can be seen as a neural network which iterates over wavelet modulus operators $|W|$. Each layer $m$ outputs averaged invariant $S_{m} x$ and covariant coefficients $U_{m+1} x$.}
\label{fig:translation_scat}
\end{figure*}

The input of the network is the original signal $U_0 x = x$.
The scattering transform is then defined by induction. For any $m \geq 0$,
applying the wavelet modulus operator $|W|$
on $U_m x$ outputs the scattering coefficients $S_m x$ and 
computes the next layer of coefficients $U_{m+1} x$:
\begin{equation}
\label{indfsdf}
|W| \,U_{m} x = (S_{m} x \,,\,U_{m+1} x)~,
\end{equation}
with
\begin{eqnarray*}
S_m x (u,\la_1,\dotsc,\la_m) &=& U_m x(.,\la_1,\dotsc,\la_m) \star \phi_J (u)\\
&=&  |\,||x \star \psi_\lau| \star \dotsc |\star \psi_{\la_m} |\star \phi_J(u)
\end{eqnarray*}
and
\begin{eqnarray*}
\label{insdf8sdf8m}
U_{m+1} &x&(u,\la_1,\dotsc,\la_m,\la_{m+1})\\
 &=& |U_m x(.,\la_1,\dotsc,\la_m) \star \psi_{\la_{m+1}}(u)|\\
&=&  |\,||x \star \psi_\lau| \star\dotsc |\star \psi_{\la_m}| \star \psi_{\la_{m+1}}(u)|
\end{eqnarray*}
This scattering transform is illustrated in Figure
\ref{fig:translation_scat}. The final scattering vector concatenates all scattering coefficients
for $0 \leq m \leq \mm$:
\begin{equation}
S x = ( S_m x )_{0 \leq m \leq \mm}.
 \label{eq:scattvec}
\end{equation}
A scattering tranform is a non-expansive operator, 
which is stable to deformations. 
Let $\|S x \| = \sum_m \|S_m x \|^2$, one can prove that
\begin{equation}
\| S x - S y \| \leq \|x - y \|~.
 \label{eq:scadsfnou98}
\end{equation}
Because wavelets are localized and separate scale we can also prove that
if $x$ has a compact support then there exists $C > 0$ such that
\begin{equation}
\label{Wave1}
\|S x_\tau  - S x \| \leq C\,\|x\| \Bigl(
2^{-J} \|\tau \|_\infty +  \|\nabla \tau \|_\infty  
+  \|H \tau \|_\infty  \Bigr)~.
\end{equation}

Most of the energy of scattering coefficients is concentrated on the
first two layers $m = 1, 2$. As a result, applications thus typically
concentrate on these two layers. Among second layer scattering
coefficients
\[
S_2 x (u,\la_1,\la_2) = ||x \star \psi_{\la_1}|\star \psi_{\la_2}|\star \phi_{J}(u)
\]
coefficients $\la_2 = 2^{j_2} r_{\theta_2}$ with $2^{j_2} \leq 2^{j_1}$ have
a small energy. Indeed, $|x \star \psi_{\la_1}|$ has an energy concentrated
in a lower frequency band. As a result, we only compute scattering
coefficients for increasing scales $2^{j_2} > 2^{j_1}$.

\subsection{Separable Versus Joint Rigid Motion Invariants}
\label{subsec:separable_vs_joint}

An invariant to a group which is a product of two sub-groups can be
implemented as a separable product of two invariant operators on each
subgroup. However, this separable invariant is often too strong, and
loses important information. This is shown for translations and rotations.

\begin{figure*}
\begin{center}
\includegraphics[width=0.45\textwidth]{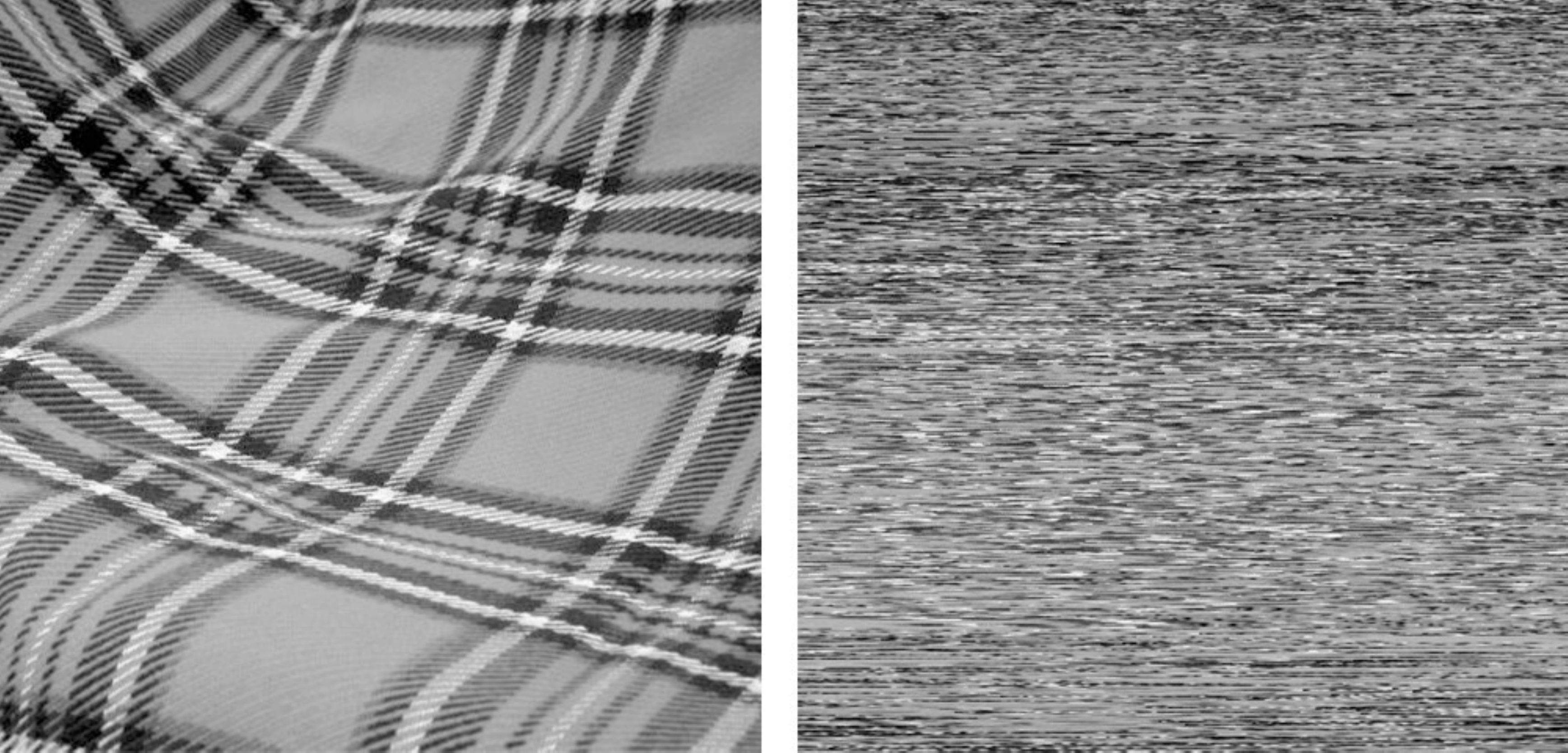}\ \ \ \ \ \ \ \ 
\includegraphics[width=0.45\textwidth]{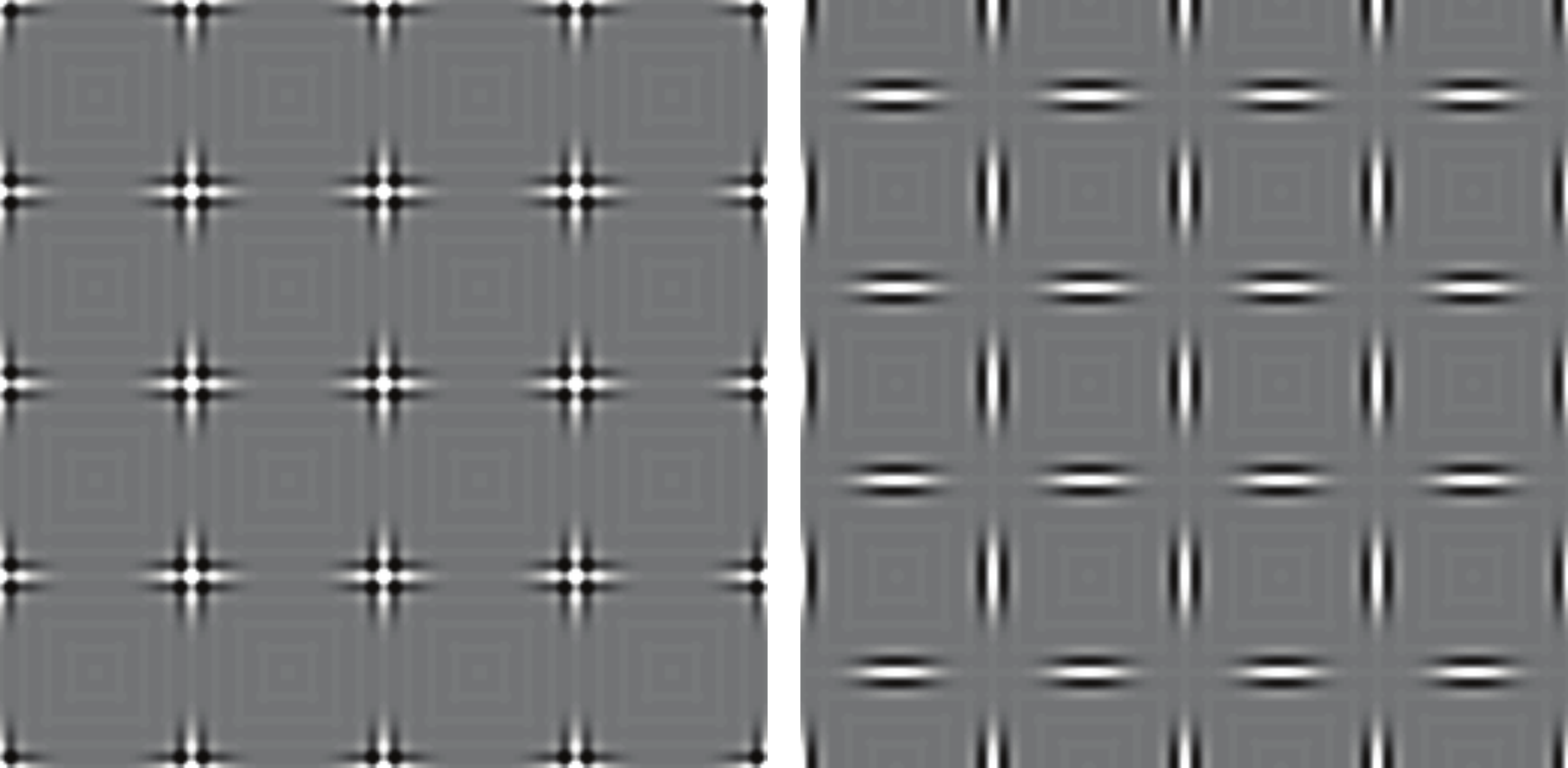}
\end{center}
\caption{(Left) Two images where each row of the second image is translated by a different amount $v(u_1)$. A separable translation invariant that would start by computing a translation invariant for each row would output the same value, which illustrates the fact that such separable invariants are too strong. (Right) Two textures  whose first internal layer is translated by different values for different orientations. In this example, vertical orientations are not translated while horizontal orientations are translated by $1/2(1,1)$. Translation scattering and other separable invariants cannot distinguish these two textures because it does not connect vertical and horizontal nodes.}
\label{fig:joint_vs_sep}
\end{figure*}

To understand the loss of information produced by separable invariants
let us first consider the two-dimensional translation group over $\R^2$. 
A two-dimensional translation invariant operator applied to $x(u_1, u_2)$
can be computed by applying first a translation invariant operator $\Phi_1$ 
which transforms $x(u_1, u_2)$ along $u_1$ for $u_2$ fixed. Then a
second translation invariant operator $\Phi_2$ is applied along $u_2$. 
The product $\Phi_2 \Phi_1$ is thus invariant to any two-dimensional 
translation. However, if $x_v (u_1,u_2) = x(u_1-v (u_2),u_2)$ then 
$\Phi_1 x_v = \Phi_1 x$ for all $v(u_2)$, 
although $x_v$ is not a translation of $x$ because $v(u_2)$ is
not constant. It results that $\Phi x = \Phi x_v$. 
This separable operator is invariant to a much larger
set of operators than two-dimensional translations and can thus
confuse two images which are not translations of one-another,
as in Figure \ref{fig:joint_vs_sep} (left). To avoid this information loss, it
is necessary to build a translation invariant operator which takes into
account the structure of the two-dimensional group. This is why 
translation invariant scattering operators in $\R^2$ are not computed
as products of scattering operators along horizontal and vertical variables.

The same phenomena appears for invariants along translations and rotations,
although it is more subtle because translations and rotations interfere.
Suppose that we apply a translation invariant operator $\Phi_1$, such as
a scattering transform, which separate image components along different
orientations indexed by an orientation parameter $\theta \in [0,2\pi]$.
Applying a second rotation invariant operator $\Phi_2$ which acts along
$\theta$ produces a translation and rotation invariant operator. 

Locally Binary Pattern \citep{lbp} follows this approach. It first builds translation invariance with an histogram of oriented pattern. Then, it builds rotation invariance on top, by either pooling all patterns that are rotated versions of one another, or by computing modulus of Fourier transform on the angular difference that relates rotated patterns.

Such separable invariant operators have the advantage of simplicity
and have thus been used in several computer vision applications. However, as in the separable translation case, 
separable products of translation and rotation 
invariants can confuse very different images.
Consider a first image, which is the sum 
of arrays of oscillatory patterns along two orthogonal directions,
with same locations. If the two arrays of oriented patterns are shifted
as in Figure \ref{fig:joint_vs_sep} (right) we get a very different textures, which
are not globally translated or rotated one relatively to the other.
However, an operator $\Phi_1$ 
which first separates different orientation components and computes a translation invariant 
representation independently for each component will output the same values for both images because
it does not take into account the joint location and orientation structure
of the image. This is the case of separable scattering transforms 
\citep{esann12} or
any of the separable translation and rotation invariant in 
used in \citep{wmfs, lbp}.

Taking into account the joint structure of the rigid-motion group of
rotations and translations in $\R^2$ was proposed by 
several researchers \citep{sarti, remco, boscain}, to preserve image structures
in applications such as noise removal or image enhancement 
with directional diffusion operators  \citep{remco2}.
Similarly, a joint scattering invariant to translations and rotations
is constructed directly on the rigid-motion group in order to take
into account the joint information between positions and orientations.

\section{Rigid-motion Scattering}
\label{sec:roto-trans-scat}

\begin{figure*}
\begin{center}
\psfrag{labu1}{$v_1$}
\psfrag{labu2}{$v_2$}
\psfrag{labth}{$\theta$}
\psfrag{lab1}{$ \tx(v_1, v_2, \theta)$}
\psfrag{lab2}{$ \star y(r_{-\theta}.) $}
\psfrag{lab3}{$ \oconv \oy $}
\psfrag{lab4}{$ \tx \tconv \tiy$}
\includegraphics[width=0.9 \linewidth]{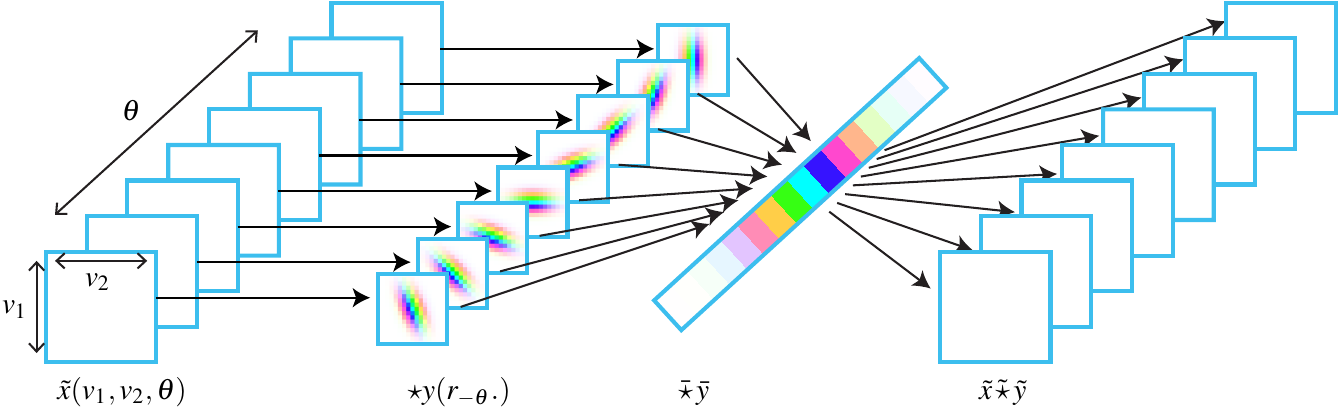}
\caption{A rigid-motion convolution (\ref{dfonsdfsd})
with a separable filter $\tiy (v,\theta) = y(v) \oy(\theta)$
in $\SE$ can be factorized into a two dimensional convolution 
with rotated filters $y(r_{-\theta}v)$ 
and a one dimensional convolution with $\oy(\theta)$.}
\label{fig:3dfilter}
\end{center}
\end{figure*}

Translation invariant scattering operators are extended to 
define invariant representations over any Lie group,
by calculating wavelet transforms on this group.
Such wavelet transforms are well defined with weak conditions on the
Lie group. We concentrate on invariance to the
action of rotations and
translations, which belong to the special Euclidean group.
Next section briefly reviews the properties of the special
Euclidean group. 
A scattering operator \citep{mallat} 
computes an invariant image representation relatively to the action of
a  group by applying wavelet transforms to functions
defined on the group.

\subsection{Rigid-Motion Group}
\label{subsec:affg}

The set of
rigid-motions is called the special Euclidean group $SE(2)$. We briefly
review its properties. 
A rigid-motion  in $\R^2$ is parameterized by a translation $v \in \R^2$
and a rotation $r_\theta \in SO(2)$ 
of angle $\theta \in [0,2 \pi)$.  We write 
$g = (v,\theta)$. 
Such a rigid-motion $g$ maps $u \in \R^2$ to 
\begin{equation}
\label{sdfns}
gu = v + r u \, .
\end{equation}
A rigid-motion $g$ applied to an image $x(u)$ translates and rotates the
image accordingly:
\begin{equation}
g.x (u) =  x(g^{-1} u) = x(r^{-1}(u-v)) \ .
\end{equation}

The group action (\ref{sdfns}) must be compatible with the product $g'.(g u) = (g'.g)u$, so that successive applications of two rigid-motions $g$ and $g'$ are equivalent to the application of a single product rigid-motion $g'.g$. This combined to (\ref{sdfns}) implies that
\begin{equation}
\label{eq:affineproduct}
g'.g = (v' + r_{\theta'} v,\ \theta+\theta') \ .
\end{equation}
This group product is not commutative. 
The neutral element is $(0,0)$, and
the inverse of $g$ is 
\begin{equation}
g^{-1} = (-r_{-\theta} v, {-\theta}).
\end{equation}

The product (\ref{eq:affineproduct}) of 
$\SE$ is the definition of the semidirect product
of the translation group $\R^2$ and the rotation group $SO(2)$:
\[
\SE = \R^2 \rtimes SO(2)~.
\]
It is a Lie group, and the left invariant Haar
measure of $\SE$ is $dg = dv\,d\theta$, obtained as a product of the
Haar measures on $\R^2$ and $SO(2)$. 

The space $\Ld(\SE)$ of finite energy
measurable functions $\tx(v,\theta)$ is a Hilbert space 
\[
\Ld(\SE) = \Big\{ \tx~:~\int_{\R^2}\int_0^{2 \pi} | \tx(v,\theta)|^2\, d\theta dv < \infty \Big\}~.
\]
The left-invariant convolution 
of two functions $\tx (g)$ and $\tiy (g)$ is defined by
\[
\tx \cconv \tiy  (g) = \int_{\SE} \tx (g')\, \tiy  (g'^{-1} g)\, dg'~.
\]
Since $(v',\theta')^{-1} = (-r_{-\theta'} v', {-\theta'})$ 
\begin{equation}
\begin{split}
\tx \cconv & \tiy (v,\theta) =  \\
& \int_{\R^2} \int_{0}^{2 \pi} 
\tx (v',\theta')\, \tiy (r_{-\theta'} (v - v'),\ \theta-\theta') \,dv' d\theta'\ .
\end{split}
\end{equation}
For separable filters $\tiy  (v,\theta) = y(v)\,\oy(\theta)$,
this convolution can be factorized into a spatial convolution with 
rotated filters $y(r_{-\theta} v)$ followed by convolution 
with $\oy (\theta)$:
\begin{equation}
\label{dfonsdfsd}
\begin{split}
\tx &\cconv  \tiy  (v,\theta) = \\
& \int_{0}^{2 \pi} \tx(v',\theta') \int_{\R^2} y(r_{-\theta'} (v - v')) dv' 
\,\oy(\theta-\theta') \, d\theta' ~.
\end{split}
\end{equation}
This is illustrated in Figure \ref{fig:3dfilter}.

\subsection{Wavelet Transform on the Rigid-Motion Group}
\label{subsec:wavelet_transform_roto_trans}

A wavelet transform $\tW$ in $\Ld(\SE)$ is defined as convolutions
with averaging window and wavelets  in $\Ld(\SE)$.
The wavelets are constructed as
separable products of wavelets in $\Ld(\R^2)$ and in $\Ld(SO(2))$.

A spatial wavelet transform in $\LD$ is defined 
from $L$ mother wavelets $\psi_{l} (u)$ which are dilated
$\psi_{l,j} (u) = 2^{-2j} \psi_l(2^{-j} u)$, and a
rotationally symmetric averaging
function $\phi_{J} (u) = 2^{-2J} \phi(2^{-J}u)$ at the maximum
scale $2^J$:
\[
W x = \Big\{ x \star \phi_J (u)\,,\,x \star \psi_{l,j} (u) \Big\}_{u \in \R^2 , 
0 \leq l < L , j < J}~.
\]

Since rotations in $\R^2$ 
parametrized by an angle in $[0,2 \pi]$,  
the space $\Ld(SO(2))$ is equivalent to the space $\Ld[0,2\pi]$.
We denote by $\ox (\theta)$ functions which are $2 \pi$ periodic and belong to
$\Ld(SO(2))$. Circular convolutions of such functions are written
\[
\ox \oconv \oy (\theta) = \int_0^{2 \pi} \ox (\theta')\, \oy(\theta - 
\theta')\, d\theta'~.
\]
Periodic wavelets are obtained by periodizing
a one-dimensional 
scaling function $\phi^1_{K}(\theta) = 2^{-K} \phi^1(2^{-K} \theta)$
and one-dimensional
wavelets $\psi^1_{k} (\theta) = 2^{-k} \psi^1(2^{-k} \theta)$
\begin{eqnarray}
\label{periwavsdfs}
\overline \phi_{K} (\theta) &=& \sum_{m \in \Z} \phi^1_{K} (\theta - 2 \pi m) \\
\overline \psi_{k} (\theta) &=& \sum_{m \in \Z} \psi^1_{k} (\theta - 2 \pi m)~.
\end{eqnarray}
The resulting one-dimensional periodic wavelet tranform
applied to a function $\ox \in \Ld[0,2\pi]$ is calculated with
circular convolutions on $[0,2\pi]$:
\begin{equation}
\label{wavedfn4}
\oW \,\ox = \Big\{ \ox \oconv \ophi_{K} \,,\, \ox \oconv \opsi_{k} \}_{k < K}~.
\end{equation}

A separable wavelet family in $\Ld(\SE)$ is constructed as
a separable product of wavelets in $\Ld(\R^2)$ and wavelets in
$\Ld(\SO)$ 
\begin{equation}
\tphi_{J,K} (v,\theta) =  \phi_{J} (v)  \, \ophi_K (\theta) 
\end{equation}
and for all $ 0 \leq l < L$
\begin{equation}
\tpsi_{l,j,k} (v,\theta) \equaldef 
\left\{
\begin{array}{ll}
 \psi_{l,j} (v)  \ \opsi_k(\theta) & \mbox{if $j < J$ and $k < K$}\\
 \psi_{l,j} (v)  \ \ophi_K (\theta) & \mbox{if $j < J$ and $k = K$}\\
 \phi_{J} (v)  \ \opsi_k (\theta) & \mbox{if $j = J$ and $k < K$}
\end{array}
\right. .
\end{equation}
The resulting wavelet transform is defined by
\[
\tW \tx = \Big\{\tx \cconv \tphi_{J,K} (v,\theta)\,,\,
\tx \cconv \tpsi_{l,j,k} (v,\theta) \Big\}_{l,j,k}~.
\]
Its energy is defined as the sum of the squared $\Ld(\SE)$ norm
of each of its component
\begin{equation}
\label{eq:norm_w2}
\|\tW \tx \|^2 \equaldef \|\tx \cconv  \tphi_{J,K} \|^2 + 
\sum_{l,j,k}
\| \tx \cconv \tpsi_{l,j,k} \|^2 \, .
\end{equation}
The following theorem gives conditions on the one and two-dimensional
wavelets so that $\tW$ is a bounded linear operator which
satisfies an energy conservation.

\begin{theorem}
\label{th:separable_wavelet_affine_group}
If there exists $\epsilon_1 > 0$ and $\epsilon_2 > 0$ such that
\begin{equation}
\label{eq:pars1}
\forall \om \in \R,~~
1 - \epsilon_1 \leq |\hat \phi^1(\om)|^2 +
\sum_{k<0}
|\hat \psi^1 (2^{k} \omega)|^2 \leq 1~,
\end{equation}
\begin{equation}
\label{eq:pars2}
\forall \om \in \R^2,~
1 - \epsilon_2 \leq |\hat \phi(\om)|^2 + 
\sum_{\substack{0 \leq l < L \\ j < 0} }
|\hat \psi_l  (2^{j} \omega)|^2 \leq 1~,
\end{equation}
then
\begin{equation}
\label{opdsfnsdf2}
(1 - \epsilon_1)\,(1-\epsilon_2) \, \| \tx \|^2
 \leq \|\tW \tx \|^2 \leq \|\tx \|^2~.
\end{equation}
\end{theorem}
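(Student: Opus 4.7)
The plan is to exploit the separable structure of both the wavelet family and the rigid-motion convolution: the factorization (\ref{dfonsdfsd}) shows that $\tx \cconv (y \cdot \oy)$ is a spatial convolution (with rotated filter) followed by an angular circular convolution. This suggests a two-step Littlewood-Paley argument, applying the 1D condition (\ref{eq:pars1}) along $\theta$ and the 2D condition (\ref{eq:pars2}) along $v$, and multiplying the resulting frame bounds.

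First I would promote the non-periodic conditions to the forms actually needed. From (\ref{eq:pars1}), a Poisson-summation/Fourier-series computation shows that the periodized wavelets $\ophi_K, \opsi_k$ satisfy
\begin{equation*}
(1-\epsilon_1)\|\ox\|^2 \leq \|\ox \oconv \ophi_K\|^2 + \sum_{k<K}\|\ox \oconv \opsi_k\|^2 \leq \|\ox\|^2
\end{equation*}
for every $\ox \in \Ld[0,2\pi]$, because the Fourier series coefficients of $\ophi_K, \opsi_k$ at integer $n$ equal $\hat\phi^1(2^K n)/(2\pi)$ and $\hat\psi^1(2^k n)/(2\pi)$, and (\ref{eq:pars1}) evaluated at $\om=2^K n$ gives exactly the frame inequality after reindexing. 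An identical argument at the level of a change of variable $\om' = 2^J\om$ in (\ref{eq:pars2}) yields the truncated 2D bound $(1-\epsilon_2)\|x\|^2 \leq \|x\star\phi_J\|^2 + \sum_{l,j<J}\|x\star\psi_{l,j}\|^2 \leq \|x\|^2$ for $x\in\LD$.

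Next I would group the four filter families in (\ref{eq:norm_w2}) by their spatial factor $y\in\{\phi_J\}\cup\{\psi_{l,j}\}_{l,j<J}$. Setting $G_y(v,\theta)\equaldef [\tx(\cdot,\theta)\star y(r_{-\theta}\cdot)](v)$, the factorization (\ref{dfonsdfsd}) gives $\tx\cconv(y\cdot \oy) = G_y \oconv_\theta \oy$, so applying the 1D bound above to $\ox=G_y(v,\cdot)$ for fixed $v$ and integrating in $v$ produces
\begin{equation*}
(1-\epsilon_1)\|G_y\|^2 \leq \|G_y\oconv_\theta\ophi_K\|^2 + \sum_{k<K}\|G_y\oconv_\theta\opsi_k\|^2 \leq \|G_y\|^2.
\end{equation*}
Summing over $y$ sandwiches $\|\tW\tx\|^2$ between $(1-\epsilon_1)$ and $1$ times $\|G_{\phi_J}\|^2 + \sum_{l,j<J}\|G_{\psi_{l,j}}\|^2$.

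Finally I would evaluate this spatial sum by Plancherel in $v$. Since $\widehat{y(r_{-\theta}\cdot)}(\om) = \hat y(r_{-\theta}\om)$, for each fixed $\theta$,
\begin{equation*}
|\hat\phi_J(r_{-\theta}\om)|^2 + \sum_{l,j<J}|\hat\psi_{l,j}(r_{-\theta}\om)|^2 \in [1-\epsilon_2,\,1]
\end{equation*}
because the truncated 2D Littlewood-Paley holds at the argument $r_{-\theta}\om$. Multiplying by $|\hat{\tx}(\om,\theta)|^2$ and integrating in $\om,\theta$ shows that the spatial sum lies between $(1-\epsilon_2)\|\tx\|^2$ and $\|\tx\|^2$; combining with the previous step yields (\ref{opdsfnsdf2}). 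The only delicate point — the ``main obstacle'' in spirit — is recognizing that the rotation $r_{-\theta'}$ introduced by the $\SE$ convolution does not break the 2D Littlewood-Paley condition, which works precisely because (\ref{eq:pars2}) is required pointwise in $\om$ rather than in some rotation-dependent form.
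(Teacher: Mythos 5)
Your proposal is correct and follows essentially the same route as the paper: factor the rigid-motion convolution into a spatial convolution with rotated filters followed by an angular circular convolution, apply the 2D Littlewood--Paley bound to the spatial step (the rotation being harmless since (\ref{eq:pars2}) holds pointwise in $\om$, which the paper encodes instead through the unitary change of variables $R\tx(v,\theta)=\tx(r_\theta v,\theta)$ and the factorization $\tW=\oW R^{-1}WR$), apply the periodic 1D bound to the angular step, and multiply the frame constants. The only substantive difference is in how the periodic 1D bound is derived --- you use Poisson summation and Parseval on the circle, while the paper unfolds the circular convolution into a line convolution of the zero-extended $\ox$ with $\psi^1_k$; your variant is if anything the more careful one, since the unfolding identity implicitly assumes no wrap-around interference from the periodized filters.
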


\begin{proof}
We denote
$Wx(u,J) = x \star \phi_J (u)$, 
$Wx(u,l,j) = x \star \psi_{l,j} (u)$, 
$\oW \ox(\theta,K) = \ox \oconv \ophi_K (\theta)$, 
$\oW \ox(\theta,k) = \ox \oconv \opsi_{k} (\theta)$. 
For $j<J$ and $k < K$, applying the separable convolution
formula (\ref{dfonsdfsd})  to
$\tpsi_{l,j,k} (v,\theta) = \psi_{l,j}(v)\,\opsi_k(\theta)$
proves that
\begin{equation}
\begin{split}
&\tx  \cconv \tpsi_{l,j,k} (v,\theta)  \\
 &=\iint \tx (v',\theta') \psi_{l,j}(r_{-\theta'} (v - v')) dv' 
\,\overline \psi_k(\theta-\theta')  d\theta' \\
&= \iint \tx (r_{\theta'} w,\theta') \psi_{l,j}(r_{-\theta'} v - w) dw
\overline \psi_k(\theta-\theta')  d\theta' .
\end{split}
\end{equation}
A similar result is obtained for all other
$\tpsi_{l,j,k} (v,\theta)$. It proves that
\begin{equation}
\label{factoriznsdf}
\tW \tilde x  = \oW R^{-1} W R \tilde  x
\end{equation}
where $R \tx(v, \theta) = \tx (r_{\theta} v, \theta)$ 
and $W \tx$ computes $W x_{\theta} (v,l,j)$ 
from each $x_{\theta} (v) = \tx (v,\theta)$.
We saw in (\ref{wavecont})  that for all $x$,
$(1 - \epsilon_1) \| x \| \leq \|W x\| \leq \| x \|$. The rotation operator $R$ is
unitary $\|R\| = 1$. We are now going to prove that for all $\ox$,
$(1 - \epsilon_2) \| \ox \|\leq \|\oW \ox \| \leq \| \ox \|$. Since $\tilde W =
\oW R^{-1} W R$ this last inequality will prove (\ref{opdsfnsdf2}).

Let $x_0 (\theta) =\overline x(\theta)$ for $\theta \in [0,2\pi)$ and
$ x_0 (\theta) = 0$ otherwise. Observe that
\[
\oW  \ox(\theta,k) =  \int_{-\infty}^{\infty} x_0(\theta')\, \psi^1_k (\theta-\theta')\,d\theta' = x_0 \star \psi^1_k (\theta)
\]
and 
\[
\oW \ox(\theta,K) =  \int_{-\infty}^{\infty} x_0(\theta') \phi^1_K (\theta-\theta')d\theta' = x_0 \star \phi^1_k (\theta)~,
\]
so that
\[
\|\oW \ox \|^2 = \|x_0 \star \phi^1_K\|^2 + \sum_{k < K}
\|x_0 \star \psi^1_k\|^2 ~
\]
where the norms on the right are norms in $\Ld(\R)$. By applying the
Plancherel formula together with (\ref{eq:pars1}) we verify that
\[
(1 - \epsilon_2) \|x_0\|^2 \leq \|\oW \ox \|^2 \leq \|x_0\|^2
\]
and since $\|x_0\|^2 = \int_0^{2 \pi} |x(\theta)|^2 d \theta = \|\ox \|^2$we
conclude that $(1 - \epsilon_2) \|\ox\| \leq \|\oW \ox\| \leq \| \ox \|$ over
$\Ld[0,2\pi]$. $\Box$
\end{proof}

\begin{figure*}\sidecaption
\psfrag{lab0}{$x$}
\psfrag{labw0}{$|W|$}
\psfrag{labw1}{$|\tW|$}
\psfrag{labw2}{$|\tW|$}
\psfrag{labwm}{$|\tW|$}
\psfrag{lab2}{$\tU_1 x$}
\psfrag{lab3}{$\tU_2 x$}
\psfrag{lab4}{$\tS_0 x$}
\psfrag{lab5}{$\tS_1 x$}
\psfrag{lab6}{$\tS_2 x$}
\psfrag{lab7}{$\dotsc$}
\psfrag{lab8}{$\tU_m x$}
\psfrag{lab9}{$\tU_{m+1} x$}
\psfrag{lab10}{$\tS_m x$}
\includegraphics[width=0.63\linewidth]{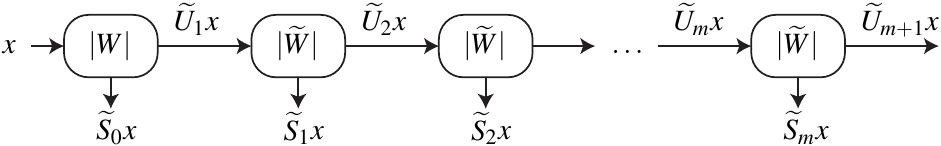}
\caption{Rigid-motion scattering is similar to translation scattering of Figure \ref{fig:translation_scat}, but deep wavelet modulus operators $|W|$ are replaced with rigid-motion wavelet modulus operators $|\tW|$ where convolutions are applied along the rigid-motion group.}
\label{fig:rigidmotion_scat}
\end{figure*}

\subsection{Rigid-Motion Invariant Scattering Transform}
\label{subsec:scat3d}

A rigid-motion invariant scattering has the same architecture
as the translation invariant scattering of Section \ref{subsec:scat2d}. It is illustrated in Figure \ref{fig:rigidmotion_scat}.
It computes a first spatial wavelet modulus operator $|W|$ and then iterates on rigid-motion wavelet modulus operators $|\tW|$.
To simplify notations, we denote $\la = (\l,j,k)$ 
and $\tLa = \{(\l,j,k)\}$. 
The rigid-motion wavelet modulus operator can be applied
to any function of the rigid-motion group $\tx (g)$ for $g = (u,\theta)$:
\[
|\tW | \tx (g) = 
\Big( \tx \tconv \tphi_{J,K} (g)
\,,\, |\tx \tconv \tpsi_{\la}(g)| \Big)_{\la \in \tLa}
.
\]
Its norm is defined by
\[
\|\tW \tx\|^2 = \|
\tx \cconv \tphi_{J,K} \|^2 + \sum_{\la \in \tLa}
\|\tx \cconv \tpsi_{\la}\|^2~.
\]

The rigid-motion scattering begins with
applying a spatial wavelet modulus operator (\ref{dsfnhdsfs2})
to $x(u)$,
\begin{equation}
\label{dsfnhdsfs2}
|W| x = \Big\{x \star \phi_{2^J}\,,\,|x \star \psi_{\theta,j}| \Big\}_{(\theta,j) \in \cPP}~.
\end{equation}
It computes the first scattering network layer
\[
\tU_1 x(u,\theta,j) = |x \star \psi_{\theta,j}(u)|~.
\]
$\tU_1 x$ is considered as a function of $g = (u,\theta)$, for each
$j$ fixed.
The scattering transform is then defined by induction, with 
successive applications of rigid-motion wavelet modulus transforms
along the $g$ variable. For any $m \geq 1$,
applying the wavelet modulus operator $|\tW|$
on $\tU_m x$ outputs the scattering coefficients $\tS_m x$ and 
computes the next layer of coefficients $\tU_{m+1} x$:
\begin{equation}
\label{indfsdf}
|\tW| \,\tU_{m} x = (\tS_{m} x \,,\,\tU_{m+1} x)~,
\end{equation}
with
\begin{equation*}
\begin{split}
\tS_m  x (g, j_1, & \la_2, \dotsc , \la_m) \\
 &= \tU_m x(.,j_1,\la_2, \dotsc,\la_m) \tconv \tphi_{J,K} (g)\\
&=  |\,||x \star \psi_{.,j_1}| \tconv \tpsi_{\la_2} \dotsc \tconv \tpsi_{\la_m} |\tconv \tphi_{J,K}(g)
\end{split}
\end{equation*}
and
\begin{equation}
\label{insdf8sdf8m}
\begin{split}
\tU_{m+1}  x(& g,j_1, \la_2, \dotsc,\la_m,\la_{m+1}) \\
 & =|\tU_m x(.,j_1,\la_2, \dotsc,\la_m) \tconv \tpsi_{\la_{m+1}}(g)|\\
 & = |\,||x \star \psi_{.,j_1}| \star \tpsi_{\la_2} \dotsc |\star \tpsi_{\la_m}| \star \tpsi_{\la_{m+1}}(g)|
\end{split}
\end{equation}
This rigid-motion scattering transform is illustrated in Figure
\ref{fig:rigidmotion_scat}.

The final scattering vector concatenates all scattering coefficients
for $0 \leq m \leq \mm$:
\begin{equation}
\tS x = ( \tS_m x )_{0 \leq m \leq \mm}.
 \label{eq:scattvec}
\end{equation}

The following theorem proves that 
a scattering transform is a non-expansive operator.

\begin{theorem}
\label{theoinsdfs}
For any $\mm \in \N$ and any $(x,y) \in \LD$
\begin{equation}
\label{non-expansfdonsdf}
\|\tS x - \tS y \| \leq \|x - y\|~.
\end{equation}
\end{theorem}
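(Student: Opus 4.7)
The plan is to show that $\tS$ is non-expansive by decomposing it as an iterated composition of two kinds of non-expansive building blocks: the wavelet transform (already shown to satisfy $\|\tW \tx\|^2 \leq \|\tx\|^2$ in Theorem \ref{th:separable_wavelet_affine_group}, with the analogous bound $\|Wx\|^2 \leq \|x\|^2$ from (\ref{wavecont})) and the pointwise complex modulus (which is $1$-Lipschitz: $||a| - |b|| \leq |a - b|$). Composing these two facts, I would first establish that the wavelet-modulus operator $|\tW|$ is non-expansive on $\Ld(\SE)$:
\begin{equation*}
\| |\tW| \tx - |\tW| \tiy \|^2 \leq \|\tW(\tx - \tiy)\|^2 \leq \|\tx - \tiy\|^2,
\end{equation*}
and likewise $\||W|x - |W|y\| \leq \|x - y\|$ on $\LD$.

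Next I would exploit the recursion (\ref{indfsdf}), $|\tW|\tU_m x = (\tS_m x, \tU_{m+1} x)$, which, because the norm on the left decomposes as a Pythagorean sum, yields at each layer $m \geq 1$
\begin{equation*}
\|\tS_m x - \tS_m y\|^2 + \|\tU_{m+1} x - \tU_{m+1} y\|^2 \leq \|\tU_m x - \tU_m y\|^2.
\end{equation*}
For the initial step I would use the spatial operator $|W|$ on $x(u)$, which gives $|W|x = (\tS_0 x, \tU_1 x)$ (with $\tS_0 x = x \star \phi_J$) and therefore
\begin{equation*}
\|\tS_0 x - \tS_0 y\|^2 + \|\tU_1 x - \tU_1 y\|^2 \leq \|x - y\|^2,
\end{equation*}
identifying $\tU_1 x$ as a function of $g=(u,\theta)$ in $\Ld(\SE)$ so that the subsequent applications of $|\tW|$ make sense with consistent norms.

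Iterating the layer inequality from $m=1$ up to $m = M$ and telescoping yields
\begin{equation*}
\sum_{m=0}^{M}\|\tS_m x - \tS_m y\|^2 + \|\tU_{M+1} x - \tU_{M+1} y\|^2 \leq \|x - y\|^2,
\end{equation*}
from which, discarding the non-negative residual term and recalling the definition (\ref{eq:scattvec}) of $\tS x$, the conclusion $\|\tS x - \tS y\|^2 \leq \|x - y\|^2$ follows.

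The main subtlety, rather than a real obstacle, is the change of underlying space between the first and subsequent layers: $x$ lives in $\LD$ while $\tU_m x$ for $m \geq 1$ lives in $\Ld(\SE)$ (indexed by the accumulated parameters $(j_1,\lambda_2,\ldots,\lambda_m)$). I would make this careful by verifying that the norm on $\tU_1 x$ arising from $\||W|x\|$, once one identifies $|x\star\psi_{\theta,j}|$ as a function of $g = (u,\theta)$ for each $j$, coincides with the $\Ld(\SE)$ norm used subsequently inside $\|\tW \cdot\|$. Once this identification is fixed, the telescoping argument goes through verbatim and requires nothing beyond the two non-expansiveness lemmas.
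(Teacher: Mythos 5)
Your proposal is correct and follows essentially the same route as the paper's own proof: non-expansiveness of the modulus and of $\tW$ give non-expansiveness of $|\tW|$, the Pythagorean decomposition of $\||\tW|\tU_m x - |\tW|\tU_m y\|^2$ yields the per-layer inequality, and telescoping from the initial bound $\|S_0 x - S_0 y\|^2 + \|\tU_1 x - \tU_1 y\|^2 \leq \|x-y\|^2$ gives the result. Your extra remark about identifying the norm on $\tU_1 x$ with the $\Ld(\SE)$ norm is a detail the paper leaves implicit, but it does not change the argument.
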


{\it Proof:} 
A modulus is non-expansive
in the sense that for any $(a,b) \in \C^2$, $||a|-|b|| \leq |a - b|$. 
Since $\tW$ is a linear non-expansive operator, it
results that the wavelet modulus operator $|\tW|$ is also non-expansive
\[
\||\tW| x - |\tW| y \| \leq \| x -  y \| ~.
\]
Since $\tW$ is non-expansive, it results from
(\ref{indfsdf}) that
\begin{equation}
\begin{split}
\| |\tW| \,\tU_{m}  x - &| \tW | \,\tU_{m} y 
\| \\
 & = \|\tS_{m} x - \tS_{m} y 
\|^2 + \|\tU_{m+1} x - \tU_{m+1} y 
\|^2 \\ 
 & \leq 
\|\tU_{m} x - \tU_{m} y \|^2 ~.
\end{split}
\end{equation}
Summing this equation from $m=1$ to $\mm$ gives
\begin{equation}
\label{sdfnsdf2}
\begin{split}
\sum_{m=1}^{\mm} \|\tS_{m} x - \tS_{m} y 
\|^2  + \|\tU_{\mm+1} & x - \tU_{\mm+1} y 
\|^2 \\
 & \leq 
\|\tU_{1} x - \tU_{1} y
\|^2 ~.
\end{split}
\end{equation}
Since $|W| x = (S_0 x , \tU_1 x)$ which is also non-expansive, we get
\begin{equation}
\label{sdfnsdf}
\|S_0 x - S_0 y
\|^2 + \|\tU_1 x - \tU_1 y
\|^2 \leq \|x - y \|^2~.
\end{equation}
Inserting (\ref{sdfnsdf}) in (\ref{sdfnsdf2}) proves (\ref{non-expansfdonsdf}).
$\Box$

\section{Fast Rigid-Motion Scattering}
\label{sec:fawt}

For texture classification applications, first and
second layers of scattering are sufficient for achieving state-of-the-art results. This section describes a fast implementation of 
rigid-motion scattering based on a filter bank implementation of the wavelet
transform.

\subsection{Wavelet Filter Bank Implementation}

\begin{figure*}\sidecaption
\psfrag{lab0}{$x$}
\psfrag{labh}{$h \downarrow 2$}
\psfrag{labg1}{$\ g_0$}
\psfrag{labg2}{$\ g_1$}
\psfrag{labg3}{$\ g_2$}
\psfrag{laba1}{$ A_1 x$}
\psfrag{laba2}{$ A_2 x$}
\psfrag{laba3}{$ A_3 x$}
\psfrag{labb10}{$ B_{0,0} x$}
\psfrag{labb20}{$ B_{1,0} x$}
\psfrag{labb30}{$ B_{2,0} x$}
\psfrag{labboo}{$ B_{0,1} x$}
\psfrag{labb21}{$ B_{1,1} x$}
\psfrag{labb31}{$ B_{2,1} x$}
\psfrag{labb12}{$ B_{0,2} x$}
\psfrag{labb22}{$ B_{1,2} x$}
\psfrag{labb32}{$ B_{2,2} x$}
\includegraphics[width=0.60\linewidth]{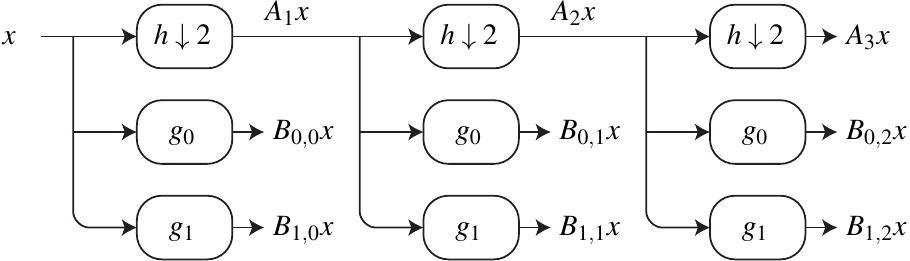}
\caption{Filter bank implementation of the wavelet transform $W$ with $J=3$ scales and $C=2$ orientations. A cascade of low pass filter $h$ and downsampling computes low frequencies $A_j x = x \star \phi_j$ and filters $g_\theta$ compute high frequencies $B_{\theta,j} x = x \star \psi_{\theta, j}$. This cascade results in a tree whose internal nodes are intermediate computations and whose leaves are the output of the downsampled wavelet transform.}
\label{fig:translation_wt}
\end{figure*}

\label{subsec:fawt}
Rigid-motion scattering coefficients
are computed by applying a spatial
wavelet tranform $W$ and then a rigid-motion wavelet tranform $\tW$.
This section describes filter bank implementations of the spatial wavelet transform.

A wavelet tranform 
\begin{equation}
W x = \Big\{ x \star \phi_J (u)\,,\,x \star \psi_{\theta,j} (u) \Big\}_{u \in \R , \theta \in \Theta , j < J}
\end{equation}
is computed with a filter bank
algorithm, also called ``algorithm \`a trous''. This assumes that the Fourier transform of the window
$\phi(u)$ and each wavelet
$\psi_\theta (u) = \psi(r_{-\theta} u)$ can be written as a product of Fourier transforms of discrete dilated filters $h$ and $g$:
\begin{equation}
\label{nf89sdf}
\hat \phi(\omega) = \prod_{j<0}^{\infty} \hat h(2^{j} \omega)~
\end{equation}
and for all $\theta \in \Theta$
\begin{equation}
\label{nf89sdf_high}
\hat \psi_\theta (\omega) = \hat g_\theta (\omega) \hat \phi(\omega)~.
\end{equation}
Let us initialize $A_0 x = x \star \phi$ and denote $A_j x (n) = x \star \phi_j(2^{j} n)$ and $B_{\theta,j} x (n) = x \star \psi_{\theta,j} (2^j n)$ for $n \in \Z^2$
It results from 
(\ref{nf89sdf}) and (\ref{nf89sdf_high}) that
\begin{eqnarray*}
A_{j+1} x(n) &=& \sum_{p} A_j x(2p)  h(n-2p) \\
B_{\theta,j} x(n) &=& \sum_{p} A_j x(p)  \, g_\theta (n-p)~. 
\end{eqnarray*}
Thus, the subsampled wavelet transform operator can be implemented as a cascade of convolution and downsampling. The convolutions are done with filters $h$ and $g_\theta$ whose support do not change with the scale of the wavelet transform. This allows to use spatial convolutions in the regime where they are faster than FFT-based convolutions.
This is compactly expressed as
\begin{eqnarray*}
A_{j+1} x &=& (A_j x \star h) \downarrow 2 \\ 
B_{\theta,j} x &=& A_j x \star g_\theta \\ 
\end{eqnarray*}
This filter bank cascade is illustrated in Figure \ref{fig:translation_wt}.  Let $N$ be the size of the input image $x$ and $P$ be the size of the filters $h$ and $g_\theta$. A convolution at the finest resolution requires $NP$ operations and $N$ memory. The cascade computes $1+C$ convolutions at each resolution $2^{-j}$. The resulting time complexity is thus $(1+C) \sum_j 2^{-2j} N P = O(CNP)$ and the required memory is $O(CN)$ where $C$ is the number of orientations, $N$ is the size of the input image, and $P$ is the size of the filters $h$ and $g$.

\begin{figure*}
\begin{center}
\psfrag{lab0}{$\tx$}

\psfrag{labh}{$h \downarrow 2$}
\psfrag{labg00}{$g_{0,0}$}
\psfrag{labg01}{$g_{0,1}$}
\psfrag{labg10}{$g_{1,0}$}
\psfrag{labg11}{$g_{1,1}$}

\psfrag{laboh}{$ \oh \downarrow 2$}
\psfrag{labog}{$ \og$}

\psfrag{laba1}{$ A_1 \tx$}
\psfrag{laba2}{$ A_2 \tx$}

\psfrag{labb00}{$ B_{0,0} \tx$}
\psfrag{labb01}{$ B_{0,1} \tx$}
\psfrag{labb10}{$ B_{1,0} \tx$}
\psfrag{labb11}{$ B_{1,1} \tx$}

\psfrag{labc12}{$ C_{1,2} \tx$}
\psfrag{labc22}{$ C_{2,2} \tx$}

\psfrag{labd11}{$ D_{1,1} \tx$}
\psfrag{labd21}{$ D_{2,1} \tx$}

\psfrag{labe002}{$ E_{0,0,2} \tx$}
\psfrag{labe012}{$ E_{0,1,2} \tx$}
\psfrag{labe102}{$ E_{1,0,2} \tx$}
\psfrag{labe112}{$ E_{1,2,1} \tx$}

\psfrag{labf002}{$ F_{0,0,2} \tx$}
\psfrag{labf012}{$ F_{0,2,1} \tx$}
\psfrag{labf102}{$ F_{1,0,2} \tx$}
\psfrag{labf112}{$ F_{1,2,1} \tx$}

\includegraphics[width=0.95\linewidth]{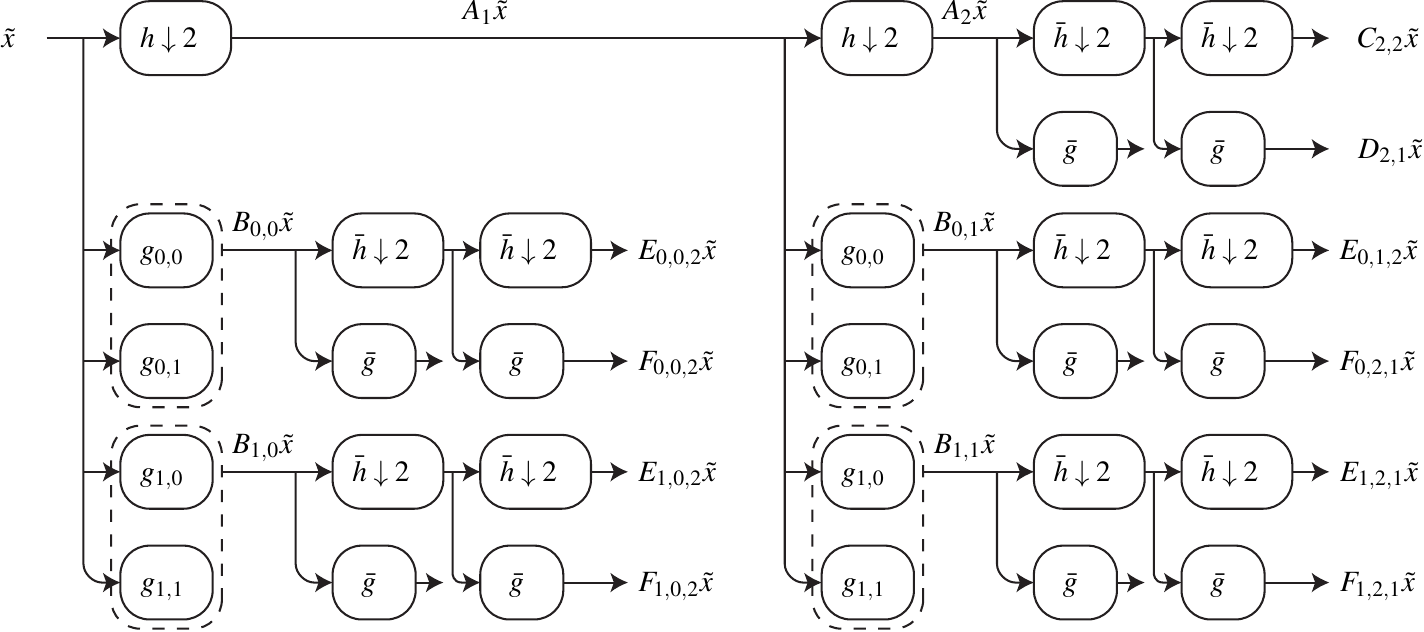}
\caption{Filter bank implementation of the rigid-motion wavelet transform $\tW$ with
 $J=2$ spatial scales, $C=2$ orientations, $L=2$ spatial wavelets, 
 $K=2$ orientation scales . A first cascade computes spatial downsampling	
  and filtering with $h$ and $g_{l,\theta}$. The first cascade 
  is a tree whose leaves are $A_J \tx$ and $B_{l,j}\tx$. 
  Each leaf is retransformed with a second cascade of downsampling and filtering 
  with $\oh$ and $\og$ along the orientation variable. The leaves of the 
  second cascade are $C_{J,K} \tx,\ D_{J,k} \tx$ (whose ancestor is $A_J \tx$) 
  and $E_{l,j,K} \tx,\ F_{l,j,k} \tx$  (whose ancestors are the $B_{l,j} \tx$). 
  These leaves constitute the output of the downsampled rigid-motion wavelet
   transform. They correspond to
    signals $\tx \tconv \tphi_{J,K}\tx,\ \tx \tconv \tpsi_{J,k},\ 
     \tx \tconv \tpsi_{l,j,K},\ \tx \tconv \tpsi_{l,j,k}$ appropriately 
     downsampled along the spatial and the orientation variable.}

\label{fig:rigid_motion_wt}
\end{center}
\end{figure*}
\subsection{Rigid Motion Wavelet Filter Bank Implementation}
Rigid motion wavelet transform $\tW$ takes as input a discretized signal $\tx(n,\theta)$ indexed by position $n$ and orientation $\theta$ and computes a set of convolutions with wavelet $\tW \tx = \{ \tx \tconv \tphi_{J,K},\ \tx \tconv \tpsi_\lambda \}_\lambda$. Similarly to Section \ref{subsec:fawt}, it is computed with two successive cascades of convolution and downsampling along the spatial and orientation variable.  Figure \ref{fig:rigid_motion_wt} illustrates this algorithm.
 We start with the spatial cascade. As previously we initialize $A_{0} \tx = \tx$ and compute
\begin{eqnarray*}
A_{j + 1} \tx  &=& (A_{j} \tx \star h) \downarrow 2  \\ 
B_{l,j} \tx  &=& A_{j} \tx \star g_{l,\theta} 
\end{eqnarray*}
The computation of $B_{l,j} \tx(n,\theta) = (A_j \tx)(., \theta) \star g_{l,\theta}(n)$ involves rotated filters $g_{l,\theta} (n) = g_l(r_{-\theta}n)$ that naturally appear in the factorization (\ref{dfonsdfsd}). There are $LC$ such filters. In our classification experiments, we have chosen to use oriented filters for $g_l$, so that $ g_{l,\theta} = g_{l+\theta}$ and there are only $L = C$ such filters.
The spatial convolution is followed by convolutions along the orientation. Let us denote the subsampled rigid-motion wavelet transform coefficients:
\begin{eqnarray*}
C_{J,K}\tx (n,\theta) &=& \tx \, \tconv \, \tphi_{J,K} (2^Jn, 2^K\theta)  \\
D_{J,k}\tx (n,\theta)  &=& \tx \, \tconv \, \tpsi_{J,k} (2^Jn, 2^k\theta)  \\
E_{l,j,K}\tx (n,\theta)  &=& \tx  \, \tconv \, \tpsi_{l,j,K} (2^jn, 2^K\theta)  \\
F_{l,j,k}\tx (n,\theta)  &=& \tx \, \tconv \, \tpsi_{l,j,k} (2^jn, 2^k\theta)  \ .
\end{eqnarray*}
These subsampled coefficients are initialized from $A$ and $B$ with $ C_{J,0} \tx = A_{J}\tx$ and $E_{l,j,0} \tx = B_{l,j}\tx $. We compute them by induction
\begin{eqnarray*}
C_{J,k+1}\tx &=& (C_{J,k}  \tx \, \oconv \, \oh) \oDownarrow 2 \\
D_{J,k} \tx &=& C_{J,k} \tx \, \oconv \, \og \\
E_{l,j,k+1} \tx &=& (E_{j,l,k} \tx \, \oconv \, \oh) \oDownarrow 2 \\
F_{l,j,k} \tx &=& E_{j,l,k} \tx \, \oconv \, \og \ 
\end{eqnarray*}
where $\oconv$, $\oDownarrow$, $\oh$, $\og$ are the discrete convolution, downsampling, low pass and high pass filters along the orientation variable $\theta$. 

The first spatial cascade computes $CL$ convolutions at each spatial resolution, which requires $O(CLNP)$ operations and $O(CLN)$ memory. Each leaf is then retransformed by a cascade along the orientation variable $\theta$ of cardinality $C$. Convolutions along the orientations are periodic and since the size of the filter $\oh,\ \og$ is of the same order as $C$, we use FFT-based convolutions. One such convolution requires $O(C \log C)$ operations. One cascade of filtering and downsampling along orientations requires $\sum_k C2^{-k} \log(C2^{-k}) = O(C\log C)$ time and $O(C)$ memory. There are $O(LN)$ such cascades so that the total cost for processing along orientation is $O(CLN\log C)$ operations and $O(CLN)$ memory. Thus, the total cost for the full rigid-motion wavelet transform $\tW$ is $O(CLN (P + \log C))$ operations and $O(CLN)$ memory where $C$ is the number of orientations of the input signal, $L$ is the number of spatial wavelets, $N$ is the size of the input image, $P$ is the size of the spatial filters.

\section{Image Texture Classification}
\label{sec:classification}

Image Texture classification has many applications including satellite, medical and material imaging. It is a relatively well posed problem of computer vision, since the different sources of variability contained in texture images can be accurately modeled. This section presents application of rigid-motion scattering 
on four texture datasets containing different types and ranges of variability:  \citep{KTH_TIPS_DL, UIUC_DL, UMD_DL} texture datasets, and the more challenging FMD \citep{fmd,FMD_DL} materials dataset. Results are compared with state-of-the-art algorithms in table \ref{tab:tips}, \ref{tab:uiuc}, \ref{tab:umd} and \ref{tab:fmd}. All classification experiments are reproducible with the ScatNet \citep{scatnet} toolbox for MATLAB.

\subsection{Dilation, Shear and Deformation Invariance with a PCA Classifier }
\label{subsec:pca}

Rigid-motion scattering builds invariance to the rigid-motion group. Yet, texture images 
also undergo other geometric transformations such as dilations, shears or elastic deformations. 
Dilations and shears, combined with rotations and translations, 
generates the group of affine transforms. 
One can define wavelets \citep{shearlet} and a scattering transform on the affine group 
to build affine invariance. 
However this group is much larger and it would involve heavy and unnecessary 
computations. A limited range of dilations and shears is available for finite resolution images which allows one to linearizes these variations.
Invariance to dilations, shears and  deformations are obtained with
linear projectors implemented at the classifier level, by
taking advantage of the scattering's stability to small
deformation.  
In texture application there is typically a small number of 
training examples per class, in which case PCA generative classifiers
can perform better than linear SVM discriminative classifiers \citep{joan}.

Let $X_c$ be a stationary process representing a texture class $c$.
Its rigid-motion scattering transform $\tS X_c$ typically has a power law
behavior as a function of its scale parameters. It is partially linearized
by a logarithm which thus improves linear classifiers. The
random process $\log \tS X_c$ has an 
energy which is essentially concentrated in a low-dimensional affine space
\[
{\bf A}_c  = \E(\log \tS X_c) + {\bf V}_c 
\]
where ${\bf V}_c$ is the principal component
linear space, generated by the eigenvalues of
the covariance of $\log \tS X_c$ having  non-negligible eigenvalues.

The expected value $\E(\log \tS X_c)$ is estimated by the
empirical average $\mu_c$ of the
$\log \tS X_{c,i}$ for all training examples $X_{c,i}$ of the class $c$. 
To guarantee that the scattering moments are partially invariant to
scaling, we augment the training set by dilating each
$X_{c,i}$ by typically $4$ scaling factors $
\{ 1,\ \sqrt{2},\ 2,\ 2\sqrt{2} \}$.
In the following, we consider $\{ X_{c,i} \}_i$ as the set of training
examples augmented by dilation, which
are incorporated in the empirical average estimation $\mu_c$ 
of $\E(\log \tS X_c)$.

The principal components space ${\bf V}_c$ is estimated from 
the singular value decomposition (SVD) of 
the matrix of centered training example $\log \tS X_{i,c}\ -\ \mu_c$.
The number of non-zero eigenvectors which can be computed is equal to the
total number of training examples. We
define ${\bf V}_c$ as the space generated by all eigenvectors. In texture
discrimination applications, it is not necessary to regularize the
estimation by reducing the dimension of this space because there is a small number of
training examples. 

Given a test image $X$, we abusively denote by $\log \tS X$ the average
of the log scattering transform of $X$ and its dilated versions.
It is therefore a scaled averaged scattering tranform, which provides
a partial scaling invariance. We denote by
$P_{{\bf V}_c} \log \tS X$ the orthogonal projection of $\log \tS X$ in the
scattering space ${\bf V}_c$ of a given class $c$. The PCA classification
computes the class $\hat c(X)$ which minimizes the distance 
$\|(Id - P_{{\bf V}_c}) (\log \tS X - \mu_c)\|$
between $\tS X$ and the affine space $\mu_c + \V_c$:
\begin{equation}
\hat{c}(X) = \arg \min_c \| (Id - P_{{\bf V}_c}) (\log \tS X - \mu_c)\|^2
\end{equation}

The translation and rotation invariance of a rigid-motion scattering 
$\tS X$ 
results from the spatial and angle averaging implemented by the
convolution with $\tphi_{J,K}$. It is nearly translation invariant
over spatial domains of size $2^J$ and rotations of angles at most $2^K$. 
The parameters $J$ and $K$ can be adjusted by cross-validation. 
One can also avoid performing any such averaging and let the linear
supervised classifer optimize directly the averaing. This last approaoch
is possible only if there is enough supervised training examples to 
learn the appropriate averaging kernel. 
This is not the case
in the texture experiments of Section \ref{sec:classif} where few training
examples are available, but where the classification task is known to be 
fully translation and rotation invariant. The values of $J$ and $K$ are thus
maximum.

\subsection{Texture Classification Experiments}
\label{sec:classif}

This sections details classification results on image texture datasets KTH-TIPS \citep{KTH_TIPS_DL}, UIUC \citep{uiuc,UIUC_DL} and UMD \citep{UMD_DL}. Those datasets contains images with different range of variability for each different geometric transformation type. We give results for progressively more invariant versions of the scattering and compare with state-of-the-art approaches for all datasets.

\begin{figure}
\includegraphics[width=\linewidth]{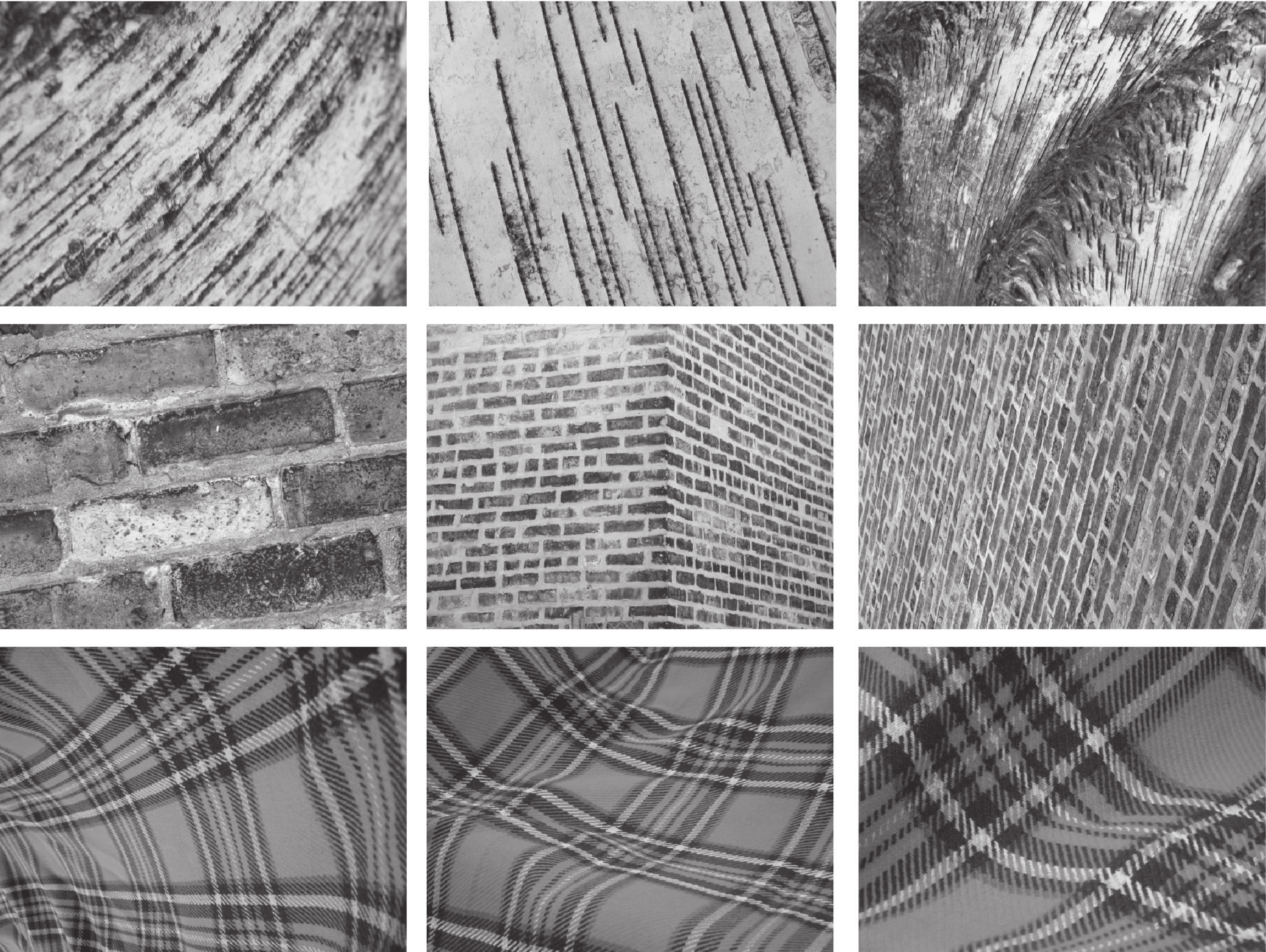}
\caption{Each row shows images from the same texture class in the UIUC database
  \citep{uiuc}, with important rotation, scaling and deformation variability.}
\label{fig:uiuc}
\end{figure}

\begin{table*}
\centering
 \begin{tabularx}{\linewidth}{l | X|X|X}
Train  size                               &  5        &  20         &      40    \\
\hline
 COX \citep{cox}                                          & $80.2 \pm 2.2$     &   $92.4 \pm 1.1$     &    $95.7 \pm 0.5$    \\
  BIF  \citep{bif}                                     &  -           &  -         &  $98.5$ \\  
  SRP \citep{srp} &- &- & $99.3$\\
\hline
Translation scattering   &   $ 69.1 \pm 3.5 $ & $ 94.8 \pm 1.3 $ & $ 98.0 \pm 0.8 $ \\
Rigid-motion scattering    & $ 69.5 \pm 3.6 $ & $ 94.9 \pm 1.4 $ & $ 98.3 \pm 0.9 $ \\
+ $\log$ \& scale invariance & $ {\bf 84.3} \pm 3.1 $ & $ {\bf 98.3}	\pm 0.9 $ & $ {\bf 99.4} \pm 0.4 $ 
 \end{tabularx} 
 \caption{Classification accuracy with standard deviations on \citep{KTH_TIPS_DL} database. Columns correspond to different training sizes per class. The first few rows give the best 
published results. The last rows give results obtained with progressively
refined scattering invariants. Best results are bolded.}
  \label{tab:tips}
\end{table*}

\begin{table*}\sidecaption
 \begin{tabularx}{0.7\linewidth}{l | X|X|X}
Training  size           &  5      &  10    &  20    \\
\hline                                             
Lazebnik \citep{uiuc} & - & $92.6$ &	$96.0$ \\
 WMFS \citep{wmfs}  & ${\bf 93.4}$ & $97.0$ & $98.6$ \\ 
 BIF  \citep{bif}   & - & - & $98.8 \pm 0.5$  \\
\hline
Translation scattering                & $ 50.0 \pm 2.1 $ & $ 65.2 \pm 1.9 $ & $ 79.8 \pm 1.8$ \\
Rigid-motion scattering       & $ 77.1 \pm 2.7 $ & $ 90.2 \pm 1.4 $ & $ 96.7 \pm 0.8$ \\
+ $\log$ \& scale invariance   & $ 93.3 \pm 1.4 $ & $ {\bf 97.8} \pm 0.6 $ & $ {\bf 99.4} \pm 0.4$ 
 \end{tabularx} 
 \caption{Classification accuracy on \citep{UIUC_DL} database. }
  \label{tab:uiuc}
\end{table*}

\begin{table*}\sidecaption
 \begin{tabularx}{0.7\linewidth}{l | X|X|X}
Training size        &  5      &  10    &  20    \\
\hline                                             
 WMFS \citep{wmfs}  & $93.4$ & $97.0$ & $98.7$\\ 
 SRP  \citep{srp}   & - & - & $99.3$ \\
\hline
Translation scattering                        & $ 80.2 \pm 1.9 $ & $ 91.8 \pm 1.4 $ & $ 97.4 \pm 0.9 $ \\
Rigid-motion scattering          & $ 87.5 \pm 2.2 $ & $ 96.5 \pm 1.1 $ & $ 99.2 \pm 	0.5 $ \\
+ $\log$ \& scale invariance       & $ {\bf 96.6} \pm 1.0 $ & $ {\bf 98.9} \pm 0.6 $ & $ {\bf 99.7} \pm 	0.3 $
 \end{tabularx} 
 \caption{Classification accuracy on \citep{UMD_DL} database.}
  \label{tab:umd}
\end{table*}

Most state of the art algorithms use separable invariants to define
a translation and rotation invariant algorithms, and thus lose joint 
information on positions and orientations. This is the case of 
\citep{uiuc} where rotation invariance is obtained through histograms along concentric circles, as well as
Log Gaussian Cox processes (COX) \citep{cox} and Basic Image Features (BIF) \citep{bif} which use rotation invariant patch 
descriptors calculated from small filter responses.
Sorted Random Projection (SRP) \citep{srp} replaces histogram with a similar sorting algorithm and adds
fine scale joint information between orientations and spatial positions
by calculating radial and angular differences before sorting. 
Wavelet Multifractal Spectrum (WMFS) \citep{wmfs} computes wavelet 
descriptors which are averaged in space and rotations, and are
similar to first order scattering coefficients $S_1 x$. 

We compare the best published results \citep{uiuc,cox,bif,wmfs,srp} 
and scattering invariants 
on KTH-TIPS (table \ref{tab:tips}), UIUC (table \ref{tab:uiuc}) and UMD (table \ref{tab:umd}) texture databases.
For the KTH-TIPS, UIUC and UMD database, Tables 1,2,3 give the mean classification accuracy and standard deviation over 200 random splits between training and testing for different training sizes. 
Classification accuracy is computed with scattering representations
implemented with progressively more invariants, and with
the PCA classifier of Section \ref{subsec:pca}. 
As the training sets are small for each class $c$, 
the dimension $D$ of the high variability 
space ${\bf V}_c$ is set to the training size. 
The space ${\bf V}_c$ is thus generated by the $D$ scattering vectors
of the training set.
For larger training databases, it must
be adjusted with a cross validation as in \citep{joan}.

Classification accuracy in Tables 1,2,3 are given for different scattering
representations. The rows ``Translation scattering''  correspond to 
 the scattering described in Section \ref{subsec:scat2d} and initially introduced in \citep{joan}.
The rows ``Rigid-motion scattering'' replace the translation invariant scattering
by the rigid-motion scattering
of Section \ref{subsec:scat3d}. Finally, the rows ``+ $\log$ \& scale invariance'' corresponds to the rigid-motion scattering, with a logarithm non-linearity to linearize scaling, and with the partial scale invariance
described in Section \ref{subsec:pca}, with augmentation at training and averaging at testing along a limited range of dilation.

\begin{table}
\centering
 \begin{tabularx}{\linewidth}{l | X}
Training size        &  50    \\
\hline                                             
 SRP  \citep{srp}   & $48.2$  \\
 Best single feature (SIFT) in \citep{fmd2}& $ 41.2$ \\
\hline
Rigid-motion scattering + $\log$ on grey images          & $51.22 $ \\
Rigid-motion scattering + $\log$ on YUV images & $53.28$
 \end{tabularx} 
 \caption{Classification accuracy on \citep{FMD_DL} database.}
  \label{tab:fmd}
\end{table}

\citep{KTH_TIPS_DL} contains 10 classes of 81 samples with controlled scaling, shear and illumination variations but no rotation. The Rigid-motion scattering does not degrade results but the scale invariant provides significant improvement.

\citep{UIUC_DL} and \citep{UMD_DL} both contains 25 classes of 40 samples with uncontrolled deformations including shear, perspectivity effects and non-rigid deformations. For both these databases, rigid-motion scattering and the scale invariance provide considerable improvements over translation scattering. The overall approach achieves and often exceeds state-of-the-art results on all these databases.

\citep{FMD_DL} contains 10 classes of 100 samples. Each class contains images of the same material manually extracted from Flickr. Unlike the three previous databases, images within a class are not taken from a single physical sample object but comes with variety of material sub-types which can be very different. Therefore, the PCA classifier of Section \ref{subsec:pca} can not linearize deformation and discriminative classifiers tend to give better results. The scattering results reported in table \ref{tab:fmd} are obtained with a one versus all linear SVM. Rigid-motion $\log$ scattering applied to each channel of YUV image and concatenated achieves 52.2 \% accuracy which is to our knowledge the best for a single feature. Better results can be obtained using multiple features and a feature selection framework \citep{fmd2}.

\section{Conclusion}
Rigid motion scattering provides stable translation and rotation invariants through a cascade of wavelet transform along the spatial and orientation variables. We have shown that such joint operators provide tighter invariants than separable operators, which tends to be too strong and thus lose too much information.
A wavelet transform on the rigid-motion group has been introduced, with a fast implementation based on two downsampling and filtering cascade. Rigid-motion scattering has been applied to texture classification in presence of large geometric transformations and provide state-of-the-art classification results on most texture datasets. 

Recent work \citep{edouard} has shown that rigid-motion scattering, with extension to dilation, could also be used for more generic vision task such as object recognition, with promising results on the CalTech 101 and 256 datasets. For large scale deep networks, group convolution might also be useful to learn more structured and meaningful multidimensional filters.



\begin{thebibliography}{}\small

\bibitem[Hinton \& Salakhutdinov(2006)]{hinton}
G. E. Hinton and R. R. Salakhutdinov,
``Reducing the dimensionality of data with neural networks",
Science, Vol. 313. no. 5786, pp. 504 - 507, 28 July 2006.

\bibitem[Lecun et al.(2010)]{lecun}
Y. LeCun, K. Kavukvuoglu and C. Farabet, ``Convolutional Networks and Applications in Vision", Proc. of ISCAS 2010.

\bibitem[Poggio et al.(2012)]{poggio}
T. Poggio, J. Mutch, F. Anselmi, L. Rosasco, J.Z. Leibo, and A. Tacchetti, ``The computational magic of the ventral stream: sketch of a theory (and why some deep architectures work)", MIT-CSAIL-TR-2012-035, December 2012.

\bibitem[Sermanet et al.(2013)]{sermanet}
P. Sermanet, K. Kavukcuoglu, S. Chintala, Y. LeCun, ``Pedestrian Detection with Unsupervised Multi-Stage Feature Learning", 
Proc. of Computer Vision and Pattern Recognition (CVPR), 2013. 

\bibitem[Krizhevsky et al.(2012)]{alex}
A. Krizhevsky, I. Sutskever, and G.E. Hinton,
``ImageNet Classification with Deep Convolutional Neural Networks", 
Proc. of Neural Information Processing Systems (NIPS), 2012

\bibitem[Dean et al.(2012)]{google}
J. Dean, G.S. Corrado, R. Monga, K. Chen, M. Devin, Q.V. Le, M.Z. Mao, M.A. Ranzato, A. Senior, P. Tucker, K. Yang, A. Y. Ng, 
``Large Scale Distributed Deep Networks",
Proc. of Neural Information Processing Systems (NIPS), 2012.


\bibitem[Bruna \& Mallat(2013)]{joan}
J. Bruna, S. Mallat, ``Invariant Scattering Convolution Networks'',
Trans. on PAMI, vol. 35, no. 8, pp. 1872-1886, 2013.

\bibitem[Mallat(2012)]{mallat}
S. Mallat  ``Group Invariant Scattering'', 
Communications in Pure and Applied Mathematics, vol. 65, no. 10. pp. 1331-1398, 2012.

\bibitem[Sifre \& Mallat(2012)]{esann12}
L. Sifre, S. Mallat,
``Combined scattering for rotation invariant texture analysis'', 
Proc. of European Symposium on Artificial Neural Networks (ESANN), 2012.

\bibitem[Sifre \& Mallat(2013)]{cvpr13}
L. Sifre, S. Mallat,
``Rotation, Scaling and Deformation Invariant Scattering for Texture
Discrimination'',
Proc. of Computer Vision and Pattern Recognition (CVPR), 2013. 

\bibitem[Bruna et al.(2013)]{joan2}
J. Bruna, S. Mallat, E. Bacry and J-F. Muzy,  ``Intermittent Process Analysis with Scattering Moments'', submitted to Annals of Statistics, Nov 2013.

\bibitem[Oyallon et al.(2014)]{edouard}
E. Oyallon, S. Mallat, L. Sifre ``Generic Deep Networks with Wavelet Scattering'', submitted to International Conference on Learning Representations (ICLR), 2014.


\bibitem[Citti \& Sarti(2006)]{sarti}
G. Citti, A. Sarti, 
``A Cortical Based Model of Perceptual Completion in the Roto-Translation Space'',
Journal of Mathematical Imaging and Vision archive,
Vol. 24, no.  3, p. 307=326, 2006.

\bibitem[Boscain et al.(2013)]{boscain}
U. Boscain, J. Duplaix, J.P. Gauthier, F. Rossi, ``Anthropomorphic Image Reconstruction via Hypoelliptic Diffusion'', SIAM Journal on Control and Optimization, Volume 50, Issue 3, pp. 1071-1733, 2012.

\bibitem[Duits \& Burgeth(2007)]{remco}
R. Duits, B. Burgeth,
``Scale Spaces on Lie Groups'', in
Scale Space and Variational Methods in Computer Vision,
Springer Lecture Notes in Computer Science, Vol. 4485, pp 300-312, 2007.

\bibitem[Duits \& Franken(2011)]{remco2}
R. Duits, E. Franken,
``Left-Invariant Diffusions on the Space of Positions and Orientations and their Application to Crossing-Preserving Smoothing of HARDI images'',
International Journal of Computer Vision, Volume 92, Issue 3, pp 231-264, 2011.


\bibitem[Leung \& Malik(2001)]{maliktex}
T. Leung, J. Malik,
``Representing and Recognizing the Visual Appearance of Materials
using Three-dimensional Textons'',
International Journal of Computer Vision , Volume 43, Issue 1, pp 29-44, 2001.

\bibitem[Girshick1 et al.(2013)]{malik}
R. Girshick1, J. Donahue, T. Darrell, J. Malik,
``Rich feature hierarchies for accurate object detection and semantic segmentation'', arXiv preprint:1311.2524, 2013.

\bibitem[Lowe(2004)]{sift}
D. Lowe, ``Distinctive image features from scale-invariant keypoints", IJCV, 60(4):91–110, 2004.

\bibitem[Lazebnik et al.(2005)]{uiuc}
S. Lazebnik, C. Schmid and J. Ponce, ``A sparse texture representation using local affine regions",
Trans. on PAMI, vol. 27, no. 8, pp. 1265-1278, 2005.

\bibitem[Lazebnik et al.(2006)]{pyramid}
S. Lazebnik, C. Schmid and J. Ponce,
``Beyond Bags of Features: Spatial Pyramid Matching for Recognizing Natural Scene Categories",
Proc. of Computer Vision and Pattern Recognition (CVPR), 2006. 

\bibitem[Tola et al.(2010)]{daisy}
E. Tola, V. Lepetit, P. Fua
``DAISY: An Efficient Dense Descriptor Applied to Wide Baseline Stereo"
Trans. on PAMI,
Vol. 32, Nr. 5, pp. 815 - 830, 2010.

\bibitem[Nguyen et al.(2011)]{cox}
H.-G. Nguyen, R. Fablet, and J.-M. Boucher, ``Visual textures as realizations of multivariate log-Gaussian Cox processes",
Proc. of Computer Vision and Pattern Recognition (CVPR), 2011.

\bibitem[Crosier \& Griff(2008)]{bif}
M. Crosier and L.D. Griffin, ``Texture classification with a dictionary of basic image features",
Proc. of Computer Vision and Pattern Recognition (CVPR), 2008.

\bibitem[Xu et al.(2010)]{wmfs}
Y. Xu, X. Yang, H. Ling and H. Ji, ``A new texture descriptor using multifractal analysis in multi-orientation wavelet pyramid",
Proc. of Computer Vision and Pattern Recognition (CVPR), 2010.

\bibitem[Liu et al.(2011)]{srp}
L. Liu, P. Fieguth, G. Kuang, H. Zha, ``Sorted Random Projections for Robust Texture Classification", Proc. of ICCV, 2011.

\bibitem[Zhao et al.(2012)]{lbp}
G. Zhao, T. Ahonen, J. Matas, M. Pietikäinen,
``Rotation-invariant image and video description with local binary pattern features",
 Trans. on Image Processing, 21(4):1465-1467, 2012.

\bibitem[Sharan et al.(2009)]{fmd}
 L. Sharan, R. Rosenholtz, E. H. Adelson,
 ``Material perception: What can you see in a brief glance?",
 Journal of Vision, 9(8):784, 2009.

\bibitem[Sharan et al.(2013)]{fmd2}
L. Sharan, C. Liu,
Ruth Rosenholtz, Edward H. Adelson,
``Recognizing Materials Using Perceptually Inspired Features",
International Journal of Computer Vision
 Volume 103, Issue 3, pp 348-371, 2013.




\bibitem[Yu \& Morel(2009)]{asift}
G. Yu and J.M. Morel, 
``A Fully Affine Invariant Image Comparison Method",
 Proc. of International Conference on Acoustics, Speech, and Signal Processing (ICASSP), Taipei, 2009.

\bibitem[Donoho et al.(2011)]{shearlet}
D. L. Donoho, G. Kutyniok, M. Shahram and X. Zhuang. 
``A Rational Design of Discrete Shearlet Transform",
Proc. of SampTA'11 (Singapore), 2011.



\bibitem[Mallat(2008)]{mallatbook}
S. Mallat,
``A Wavelet Tour of 
Signal Processing, 3rd ed.",
Academic Press, 2008.

\bibitem[Renninger \& Malik(2004)]{malikscene}
L. W. Renninger and J. Malik,
``When is scene recognition just texture recognition?",
Vision Research, 44, pp. 2301-2311, 2004.

\bibitem[KTH TIPS(2004)]{KTH_TIPS_DL} KTH-TIPS: \url{http://www.nada.kth.se/cvap/databases/kth-tips/}

\bibitem[UIUC Tex(2005)]{UIUC_DL} UIUC :  \url{http://www-cvr.ai.uiuc.edu/ponce_grp/data/}

\bibitem[UMD(2009)]{UMD_DL} UMD : \url{http://www.cfar.umd.edu/~fer/website-texture/texture.htm}

\bibitem[FMD(2009)]{FMD_DL} FMD : \url{http://people.csail.mit.edu/celiu/CVPR2010/FMD/}

\bibitem[ScatNet(2013)]{scatnet} ScatNet, a MATLAB toolbox for scattering network : \url{http://www.di.ens.fr/data/software/scatnet/}

\end{thebibliography}

\end{document}